\newtheorem{theorem}{Theorem} 
\newtheorem{lemma}{Lemma} 
\newtheorem{remark}{Remark} 
\newtheorem{definition}{Definition} 
\newtheorem{infthm}{Informal Statement of Theorem} 
\def\maketag@@@#1{\hbox{\m@th\normalfont\normalsize#1}}
\newcommand{\R}{\mathbb{R}}
\newcommand{\E}{\mathbb{E}}
\begin{document} 
\title{Stochastic modified equations and adaptive stochastic gradient algorithms}

\author[1]{Qianxiao Li\thanks{qianxiao@ihpc.a-star.edu.sg}}
\author[3,4]{Cheng Tai\thanks{chengt@pku.edu.cn}}
\author[2,3,4]{Weinan E\thanks{weinan@math.princeton.edu}}
\affil[1]{Institute of High Performance Computing, Singapore}
\affil[2]{Department of Mathematics and PACM, Princeton University, USA}
\affil[3]{Beijing Institute of Big Data Research, Beijing, China}
\affil[4]{Peking University, Beijing, China}

\maketitle









\begin{abstract} 
We develop the method of stochastic modified equations (SME), in which stochastic gradient algorithms are approximated in the weak sense by continuous-time stochastic differential equations. We exploit the continuous formulation together with optimal control theory to derive novel adaptive hyper-parameter adjustment policies. Our algorithms have competitive performance with the added benefit of being robust to varying models and datasets. This provides a general methodology for the analysis and design of stochastic gradient algorithms. 
\end{abstract} 

\section{Introduction}

\label{sec:introduction}
Stochastic gradient algorithms are often used to solve optimization problems of the form
\begin{equation}
\min_{x\in \R^d}\quad  f(x) := \frac{1}{n}\sum_{i=1}^nf_i(x),
\label{eq:finite_form}
\end{equation}
where $f,f_i:\mathbb{R}^d\rightarrow\mathbb{R}$ for $i=1,\dots,n$. In machine learning applications, $f$ is typically the total loss function whereas each $f_i$ represents the loss due to the $i^{\text{th}}$ training sample. $x$ is a vector of trainable parameters and $n$ is the training sample size, which is typically very large.

Solving~\eqref{eq:finite_form} using the standard gradient descent (GD) requires $n$ gradient evaluations per step and 
is prohibitively expensive when $n\gg 1$. An alternative, the stochastic gradient descent (SGD), is to replace 
the full gradient $\nabla f$ by a sampled version, serving as an unbiased estimator. In its simplest form, the SGD iteration is written as
\begin{equation}
x_{k+1} = x_{k} - \eta \nabla f_{\gamma_k}(x_k), \label{eq:sga_iter}
\end{equation}
where $k\geq 0$ and $\{\gamma_k\}$ are i.i.d uniform variates taking values in $\{1,2,\cdots, n\}$. The step-size $\eta$ is the learning rate. Unlike GD, SGD samples the full gradient and its computational complexity per iterate is independent of $n$. For this reason, stochastic gradient algorithms have become increasingly popular in large scale problems. 

Many convergence results are available for SGD and its variants. However, most are upper-bound type results for (strongly) convex objectives, often lacking the precision and generality to characterize the behavior of algorithms in practical settings. This makes it harder to translate theoretical understanding into algorithm analysis and design. 

In this work, we address this by pursuing a different analytical direction. We derive continuous-time stochastic differential equations (SDE) that can be understood as weak approximations (i.e. approximations in distribution) of stochastic gradient algorithms. These SDEs contain higher order terms that vanish as $\eta\rightarrow0$, but at finite and small $\eta$ they offer much needed insight of the algorithms under consideration. In this sense, our framework can be viewed as a stochastic parallel of the method of {\it modified equations} in the analysis of classical finite difference methods~\cite{noh1960difference,daly1963stability,hirt1968heuristic,warming1974modified}. For this reason, we refer to these SDEs as {\it stochastic modified equations} (SME). 
Using the SMEs, we can quantify, in a precise and general way, the leading-order dynamics of the SGD and its variants. Moreover, the continuous-time treatment allows the application of optimal control theory to study the problems of adaptive hyper-parameter adjustments. This gives rise to novel adaptive algorithms and perhaps more importantly, a general methodology for understanding and improving stochastic gradient algorithms. 

\paragraph{\bf Notation. }
We distinguish sequential and dimensional indices by writing a bracket around the latter, e.g. $x_{k,(i)}$ is the $i^\text{th}$ coordinate of the vector $x_k$, the $k^{\text{th}}$ SGD iterate. 
 
\section{Stochastic Modified Equations} 
\label{sec:sme}
We now introduce the SME approximation. Background materials on SDEs are found in Supplementary Materials (SM) B and references therein. First, rewrite the SGD iteration rule~\eqref{eq:sga_iter} as 
\begin{equation}
x_{k+1} - x_{k} =- \eta \nabla f(x_k) + \sqrt{\eta} V_k,
\label{eq:sga_rewrite}
\end{equation}
where $V_k = \sqrt{\eta}(\nabla f(x_k) -\nabla f_{\gamma_k}(x_k))$ is a $d$-dimensional random vector. Conditioned on $x_k$, $V_k$ has mean 0 and covariance matrix $\eta\Sigma(x_k)$ with \small
\begin{equation}
\Sigma(x) = \frac{1}{n} \sum_{i=1}^n (\nabla f(x) - \nabla f_i(x))(\nabla f(x) - \nabla f_i(x))^T.
\label{eq:sigma}
\end{equation}\normalsize
Now, consider the Stochastic differential equation
\begin{equation}
dX_t = b(X_t) dt + \sigma(X_t)dW_t, \quad X_0 = x_0,
\label{eq:sde_0}
\end{equation}
whose Euler discretization $X_{k+1} = X_{k} + \Delta t b(X_k) + \sqrt{\Delta t}\sigma(X_k)Z_k$, $Z_k\sim \mathcal{N}(0,I)$
resembles ~\eqref{eq:sga_rewrite} if we set $\Delta t=\eta$, $b\sim-\nabla f$ and $\sigma\sim(\eta\Sigma)^{1/2}$. Then, we would expect~\eqref{eq:sde_0} to be an approximation of~\eqref{eq:sga_iter} with the identification $t=k\eta$. It is now important to discuss the precise meaning of ``an approximation''. The noises that drive the paths of SGD and SDE are independent processes, hence we must understand approximations in the {\it weak sense}. 
\begin{definition}
	\label{def:weak_conv}
	Let $0<\eta<1$, $T>0$ and set $N=\lfloor T/\eta\rfloor$. Let $G$ denote the set of functions of polynomial growth, i.e. $g\in G$ if there exists constants $K,\kappa>0$ such that $\vert g(x) \vert < K(1+\vert x \vert^\kappa )$.
	We say that the SDE~\eqref{eq:sde_0} is an {\it order $\alpha$ weak approximation} to the SGD~\eqref{eq:sga_iter} if for every $g\in G$, there exists $C>0$, independent of $\eta$, such that for all $k=0,1,\dots,N$, 
	\begin{equation}
	\vert \E g(X_{k\eta})-\E g(x_{k}) \vert < C \eta^\alpha.\nonumber
	\end{equation}
\end{definition}
The definition above is standard in numerical analysis of SDEs~\cite{milstein1995numerical,Kloeden1992}. Intuitively, weak approximations are close to the original process not in terms of individual sample paths, but their distributions. We now state informally the approximation theorem.
\begin{infthm}
	\label{thm:sme}
	Let $T>0$ and define $\Sigma:\mathbb{R}^{d}\rightarrow\mathbb{R}^{d\times d}$ by~\eqref{eq:sigma}. Assume $f,f_i$ are Lipschitz continuous, have at most linear asymptotic growth and have sufficiently high derivatives belonging to $G$. Then, 
	\begin{enumerate}[(i)]
		\item The stochastic process $X_t$, $t\in[0,T]$ satisfying
		\begin{equation}\small
			dX_{t} = -\nabla f(X_t)dt + (\eta\Sigma(X_t))^{\frac{1}{2}}dW_t,
			\label{eq:sme_1}
		\end{equation}\normalsize
		is an order 1 weak approximation of the SGD. 
		\item The stochastic process $X_t$, $t\in[0,T]$ satisfying\small
		\begin{align}
			dX_{t} = -\nabla (f(X_t) &+ \tfrac{\eta}{4}\vert \nabla f(X_t) \vert^2)dt + (\eta\Sigma(X_t))^{\frac{1}{2}}dW_t
			\label{eq:sme_2}
		\end{align}\normalsize
		is an order 2 weak approximation of the SGD. 
	\end{enumerate}
\end{infthm}
The full statement, proof and numerical verification of Thm.~\ref{thm:sme} is given in SM. C. We hereafter call equations~\eqref{eq:sme_1} and~\eqref{eq:sme_2} {\it stochastic modified equations} (SME) for the SGD iterations~\eqref{eq:sga_iter}. We refer to the second order approximation~\eqref{eq:sme_2} for exact calculations in Sec.~\ref{sec:dynamics} whereas for simplicity, we use the first order approximation~\eqref{eq:sme_1} when discussing acceleration schemes in Sec.~\ref{sec:acc_methods}, where the order of accuracy is less important. 

Thm.~\ref{thm:sme} allows us to use the SME to deduce distributional properties of the SGD. This result differs from usual convergence studies in that it describes dynamical behavior and is derived without convexity assumptions on $f$ or $f_i$. In the next section, we use the SME to deduce some dynamical properties of the SGD. 

\section{The Dynamics of SGD}
\label{sec:dynamics}
\subsection{A Solvable SME}
\label{sec:solvable}
We start with a case where the SME is exactly solvable. 
\label{sec:dynamics_example}
Let $n=2$, $d=1$ and set $f(x)=x^2$ with $f_1(x)=(x-1)^2-1$ and $f_2(x)=(x+1)^2-1$. Then, the SME~\eqref{eq:sme_2} for the SGD iterations on this objective is (see SM. D.1)
\begin{equation*}
dX_t = -2(1+\eta) X_t dt + 2\sqrt{\eta} dW_t,
\end{equation*}
with $X_0 = x_0$. This is the well-known Ornstein-Uhlenbeck process~\cite{uhlenbeck1930theory}, which is exactly solvable (see SM. B.3), yielding the Gaussian distribution
\begin{equation}
	X_t \sim \mathcal{N} (x_0 e^{-2(1+\eta)t}, \tfrac{\eta}{1+\eta} (1-e^{-4(1+\eta)t})).\nonumber
	\label{eq:ou_dist}
\end{equation}
We observe that $\E X_t=x_0 e^{-2(1+\eta)t}$ converges exponentially to the optimum $x=0$ with rate $-2(1+\eta)$ but $\text{Var}X_t={\eta}\left(1-e^{-4(1+\eta)t}\right)/{(1+\eta)}$ increases from $0$ to an asymptotic value of $\eta/(1+\eta)$. The separation $t^*$ between the descent phase and the fluctuations phase is given by $\E X_{t^*}=\sqrt{\text{Var}X_{t^*}}$, whose solution is
\begin{equation}
t^* = \tfrac{1}{4(1+\eta)}\log(1+\tfrac{\eta+1}{\eta}x_0^2)\nonumber
\label{eq:quad_tstar}
\end{equation}
For $t<t^*$, descent dominates and when $t>t^*$, fluctuation dominates. This two-phase behavior is known for convex cases via error bounds~\cite{moulines2011non,needell2014stochastic}. Using the SME, we obtained a precise characterization of this behavior, including an exact expression for $t^*$. In Fig.~\ref{fig:quadratic_comparisons}, we verify the SME predictions regarding the mean, variance and the two-phase behavior.

\subsection{Stochastic Asymptotic Expansion}
\label{sec:asymp}
In general, we cannot expect to solve the SME exactly, especially for $d>1$. However, observe that the noise terms in the SMEs~\eqref{eq:sme_1} and~\eqref{eq:sme_2} are $\mathcal{O}(\eta^{1/2})$. Hence, we can write $X_t$ as an asymptotic series $X_t = X_{0,t} + \sqrt{\eta} X_{1,t} + \dots$
where each $X_{j,t}$ is a stochastic process with initial condition $X_{0,0}=x_0$ and $X_{j,0}=0$ for $j\geq 1$. We substitute this into the SME and expand in orders of $\eta^{1/2}$ and equate the terms of the same order to get equations for $X_{j,t}$ for $j\geq 0$. This procedure is justified rigorously in~\citet{freidlin2012random}. We obtain to leading order (see SM. B.5),
\begin{equation}
X_t \sim \mathcal{N} (X_{0,t},\eta S_t),
\label{eq:asymp}
\end{equation}
where $X_{0,t}$ solves $\dot{X}_{0,t} = -\nabla f(X_{0,t}), X_{0,0}=x_0$ and $\dot{S}_t = -S_tH_t-H_tS_t + \Sigma_t$, where $H_t=Hf(X_{0,t})$, with $Hf$ denoting the Hessian of $f$, and $\Sigma_t=\Sigma(X_{0,t})$, $S_0=0$. 
It is then possible to deduce the dynamics of the SGD. For example, there is generally a transition between descent and fluctuating regimes. $S_t$ has a steady state (assuming it is asymptotically stable) with $\vert S_\infty\vert \sim \vert \Sigma_\infty \vert/\vert H_\infty \vert$. This means that one should expect a fluctuating regime where the covariance of the SGD is of order $\mathcal{O}(\eta \vert \Sigma_\infty \vert/\vert H_\infty \vert)$. Preceding this fluctuating regime is a descent regime governed by the gradient flow. 

We validate our approximations on a non-convex objective. Set $d=2$, $n=3$ with the sample objectives $f_1(x)=x_{(1)}^2$, $f_2(x)=x_{(2)}^2$ and $f_3(x)=\delta \cos(x_{(1)}/\epsilon)\cos(x_{(2)}/\epsilon)$. 
In Fig.~\ref{fig:egg_cart_comparisons}(a), we plot $f$ for $\epsilon=0.1,\delta=0.2$, showing the complex landscape. In Fig.~\ref{fig:egg_cart_comparisons}(b), we compare the SGD moments $\vert \E(x_k) \vert$ and $\vert\text{Cov}(x_k)\vert$ with predictions of the SME and its asymptotic approximation~\eqref{eq:asymp}. We observe that our approximations indeed hold for this objective. 

\begin{figure}[t!]
	\centering
	\subfloat[]{\includegraphics[width=6cm]{./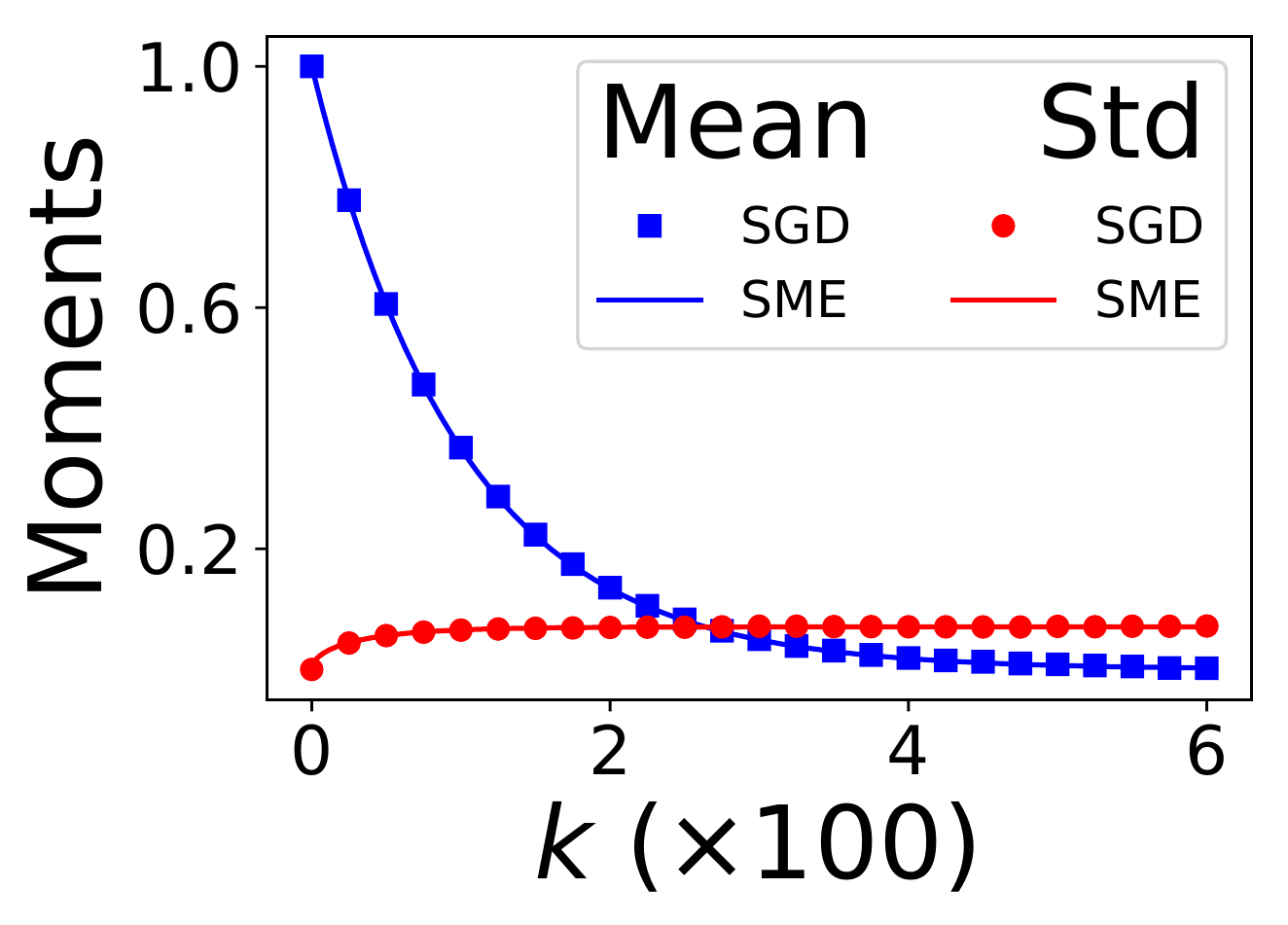}}
	\subfloat[]{\includegraphics[width=6cm]{./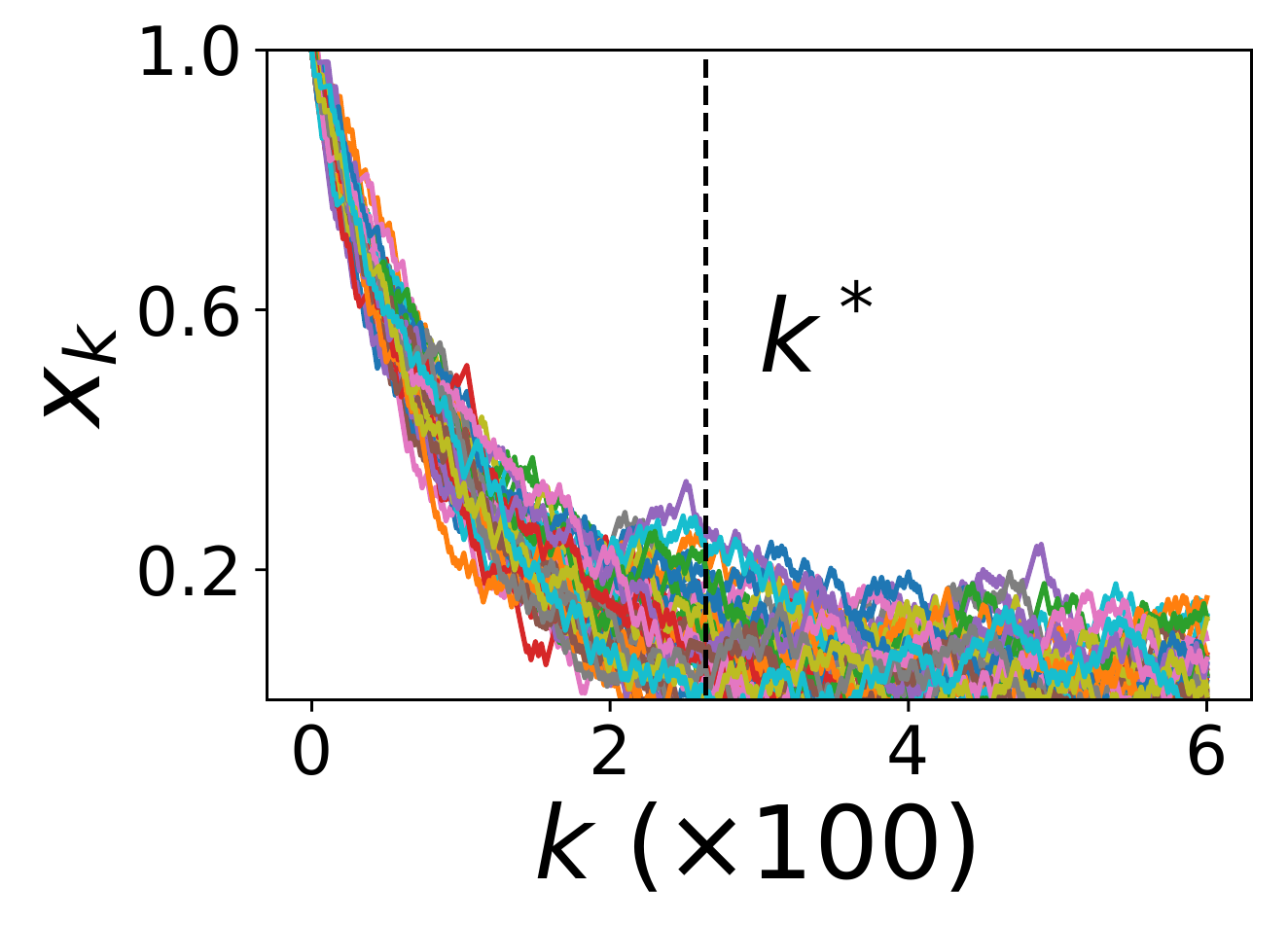}}	
	\caption{Comparison of the SME predictions vs SGD for the simple quadratic objective. We set $x_0=1$, $\eta=$5e-3. (a) The predicted mean and standard deviations agree well with the empirical moments of the SGD, obtained by averaging 5e3 runs. (b) 50 sample SGD paths the predicted transition time $k^*=t^*/\eta$. We observe that $k^*$ corresponds to the separation of descent and fluctuating regimes for typical sample paths. } 
	\label{fig:quadratic_comparisons}
\end{figure}
\begin{figure}[t!]
	\centering
	\subfloat[]{\includegraphics[height=4cm]{./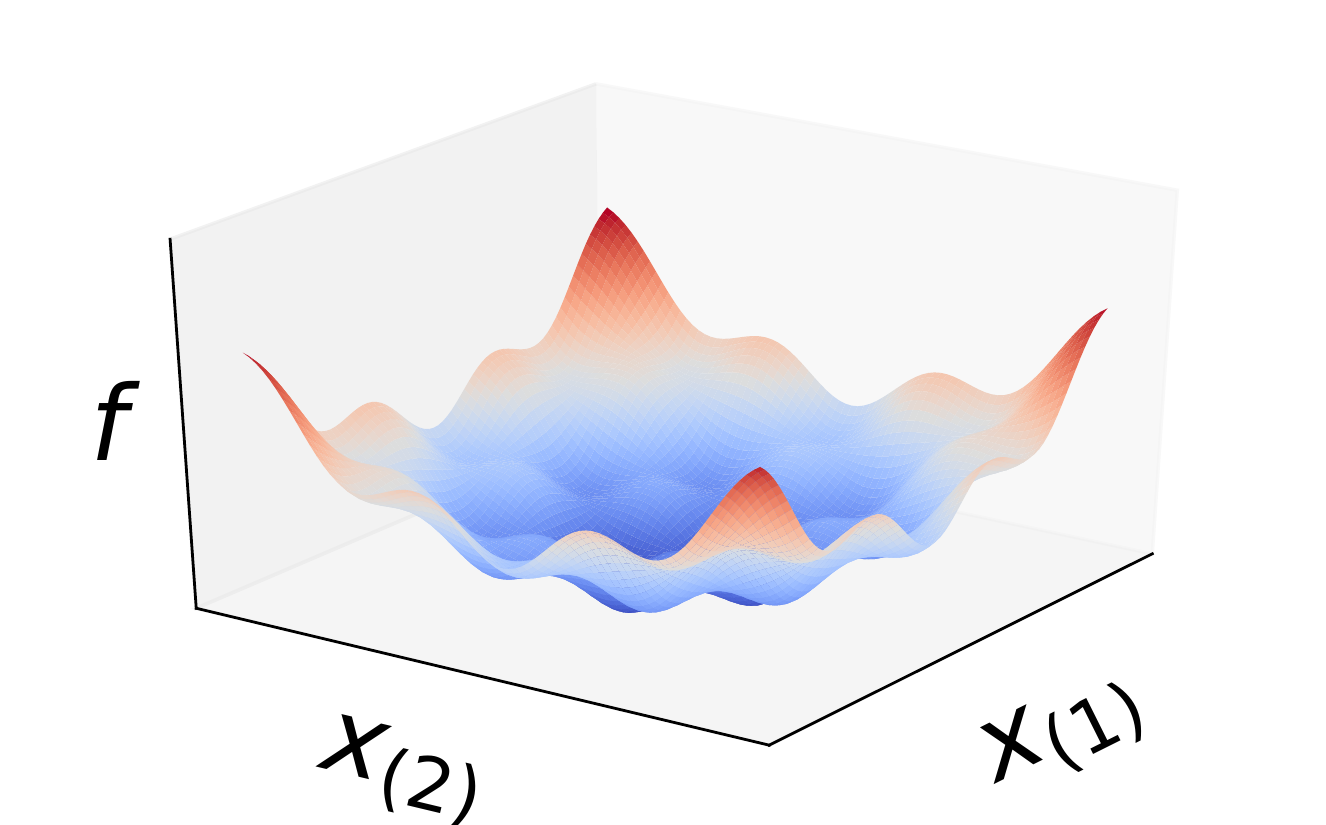}}\subfloat[]{\includegraphics[height=4cm]{./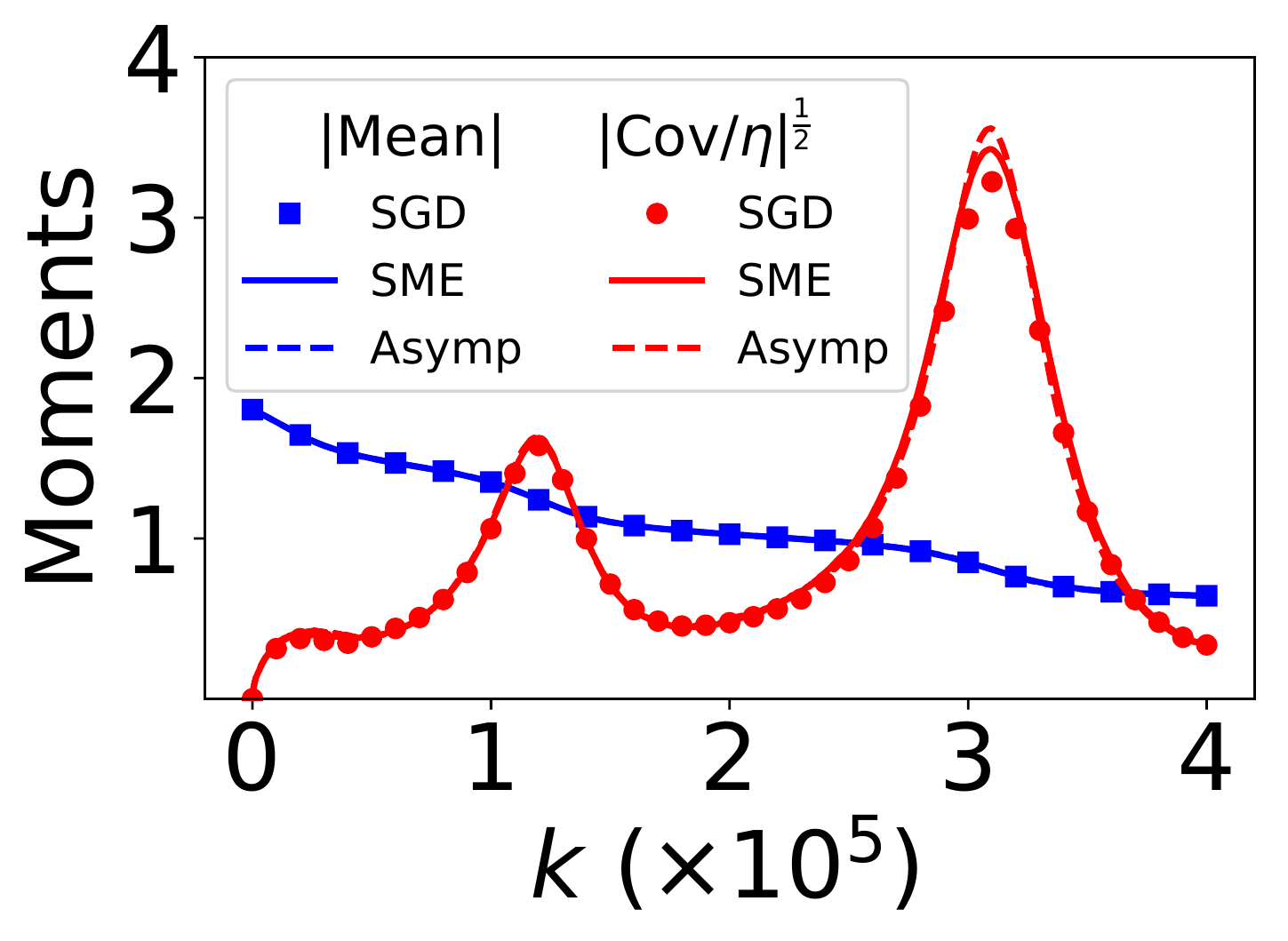}}
	\caption{Comparison of the moments of SGD iterates with the SME and its asymptotic approximation (Asymp,  Eq.~\ref{eq:asymp}) for the non-convex objective with $\delta=0.2$ and $\epsilon=0.1$. The landscape is shown in (a). In (b), we plot the magnitude of the mean and the covariance matrix for the SGD, SME and Asymp. We take $\eta=$1e-4 and $x_0=(1,1.5)$. All moments are obtained by sampling over 1e3 runs (the SME and Asymp are integrated numerically). We observe a good agreement. }
	\label{fig:egg_cart_comparisons}
\end{figure}

\section{Adaptive Hyper-parameter Adjustment}
\label{sec:acc_methods}
We showed in the previous section that the SME formulation help us better understand the precise dynamics of the SGD. The natural question is how this can translate to designing practical algorithms. In this section, we exploit the continuous-time nature of our framework to derive adaptive learning rate and momentum parameter adjustment policies. These are particular illustrations of a general methodology to analyze and improve upon SGD variants. We will focus on the one dimensional case $d=1$, and subsequently apply the results to high dimensional problems by local diagonal approximations.  

\subsection{Learning Rate}
\label{sec:control_lr}

\subsubsection{Optimal Control Formulation}
1D SGD iterations with learning rate adjustment can be written as
\begin{equation}
x_{k+1} = x_{k} - \eta u_k f^\prime(x_k), 
\label{eq:sgd_iter_lr}
\end{equation}
where $u_k\in [0,1]$ is the adjustment factor and $\eta$ is the maximum allowed learning rate. The corresponding SME for~\eqref{eq:sgd_iter_lr} is given by (SM. D.1)
\begin{equation}
dX_t = -u_t f^\prime(X_t) dt + u_t\sqrt{\eta\Sigma(X_t)}dW_t,
\label{eq:sme_iter_lr}
\end{equation}
where $u_t\in [0,1]$ is now the continuous time analogue of the adjustment factor $u_k$ with the usual identification $t=k\eta$. The effect of learning rate adjustment on the dynamics of SGD is clear. Larger $u_k$ results in a larger drift term in the SME and hence faster initial descent. However, the same factor is also multiplied to the noise term, causing greater asymptotic fluctuations. The optimal learning rate schedule must balance of these two effects. 
The problem can therefore be posed as follows: given $f,f_i$, how can we best choose a schedule or policy for adjusting the learning rate in order to minimize $\E f$ at the end of the run?
More precisely, this can be cast as an optimal control problem\footnote{See SM. E for a brief overview of optimal control theory.}
\begin{equation*}
\min_{u} \E f(X_T)  \text{ subject to } \eqref{eq:sme_iter_lr},
\label{eq:sme_lr_control_gen}
\end{equation*}
where the time-dependent function $u$ is minimized over an admissible control set to be specified. To make headway analytically, we now turn to a simple quadratic objective.

\subsubsection{Optimal Control of the Learning Rate}
\label{sec:lr_control_soln}
\label{sec:lr_opt_control}
Consider the objective $f(x) = \tfrac{1}{2} a (x-b)^2$ with $a,b\in \mathbb{R}$. Moreover, we assume the $f_i$'s are such that  $\Sigma(x) = \Sigma>0$ is a positive constant. 
The SME is then
\begin{equation}
dX_t = -a u_t  (X_t-b) dt + u_t\sqrt{\eta\Sigma}dW_t.
\label{eq:sme_iter_lr_quad}
\end{equation}
Now, assume $u$ take values in the non-random control set containing all Borel-measurable functions from $[0,T]$ to $[0,1]$. Defining $m_t=\E f(X_t)$, and applying It\^{o} formula to~\eqref{eq:sme_iter_lr_quad}, we have
\begin{equation}
\dot{m}_t = -2a u_t m_t + \tfrac{1}{2} a\eta \Sigma u_t^2.
\label{eq:sme_lr_control_ode}
\end{equation}
Hence, we may now recast the control problem as 
\begin{equation*}
\min_{u:[0,T]\rightarrow[0,1]} m_T  \text{ subject to } \eqref{eq:sme_lr_control_ode}.
\end{equation*}
This problem can solved by dynamic programming, using the Hamilton-Jacobi-Bellman equation~\cite{bellman1956dynamic}. We obtain the optimal control policy (SM. E.3)
\begin{equation}
u^*_t = \begin{cases} 
1 \qquad &a\leq0,\\ 
\min(1,\tfrac{2 m_t}{\eta\Sigma})\qquad &a>0. 
\end{cases}
\label{eq:sme_lr_control_solution}
\end{equation}
This policy is of {\it feed-back} form since it depends on the current value of the controlled variable $m_t$. 
Let us interpret the solution. First, if $a<0$ we always set the maximum learning rate $u_t=1$. This makes sense because we have a concave objective where symmetrical fluctuations about any point $x$ results in a lower average value of $f(x)$. Hence, not only do high learning rates improve descent, the high fluctuations that accompany it also lowers $\E f$. Next, For the convex case $a>0$, the solution tells us that when the objective value is large compared to variations in the gradient, we should use the maximum learning rate. When the objective decreases sufficiently, fluctuations will dominate and hence we should lower the learning rate according to the feed-back policy $u_t = {2 m_t}/{\eta\Sigma}$. 

With the policy~\eqref{eq:sme_lr_control_solution}, we can solve~\eqref{eq:sme_lr_control_ode} and plug the solution for $m_t$ back into~\eqref{eq:sme_lr_control_solution} to obtain the annealing schedule
\begin{equation*}
u^*_t = \begin{cases} 
1 \qquad &a\leq 0\text{ or }t\leq t^*,\\ 
\tfrac{1}{1+a(t-t^*)}\qquad &a>0\text{ and } t>t^*,
\end{cases}
\end{equation*}
where $t^*=(1/2a)\log({4 m_0 }/{\eta \Sigma-1})$. Note that by putting $a=2,b=0,\Sigma=4$, for small $\eta$, this expression agrees with the transition time~\eqref{eq:quad_tstar} between descent and fluctuating phases for the SGD dynamics considered in Sec.~\ref{sec:solvable}. Thus, this annealing schedule says that maximum learning rate should be used for descent phases, whereas $\sim 1/t$ decay on learning rate should be applied after onset of fluctuations. Our annealing result agree asymptotically with the commonly studied annealing schedules~\citep{moulines2011non,shamir2013stochastic}, but the difference is that we suggest maximum learning rate before the onset of fluctuations. Of course, the key limitation is that our result is only valid for this particular objective. This naturally brings us to the next question: how does one apply the optimal control results to general objectives?

\subsubsection{Application to General Objectives}
\label{sec:appl_via_localapprox}
Now, we turn to the setting where $d>1$ and $f,f_i$ are not necessarily quadratic. The most important result in Sec.~\ref{sec:lr_control_soln} is the feed-back control law~\eqref{eq:sme_lr_control_solution}. To apply it, we make a {\it local diagonal-quadratic assumption}: we assume that for each $x\in\mathbb{R}^d$, there exists $a_{(i)},b_{(i)}\in\mathbb{R}$ so that 
$f(x) \approx \tfrac{1}{2}\sum_{i=1}^{d} a_{(i)}(x_{(i)} - b_{(i)})^2$
holds locally in $x$. We also assume $\Sigma(x)\approx \text{diag}\{\Sigma_{(1)},\dots,\Sigma_{(d)}\}$ where each $\Sigma_{(i)}$ is locally constant. By considering a separate learning rate scale $u_{(i)}$ for each trainable dimension, the control problem decouples to $d$ separate problems of the form considered in Sec.~\ref{sec:lr_opt_control}. And hence, we may set $u^*_{(i)}$ element-wise according to the policy~\eqref{eq:sme_lr_control_solution}.

Since we only assume that the diagonal-quadratic assumption holds locally, the terms $a_{(i)}$, $b_{(i)}$, $\Sigma_{(i)}$ and $m_{(i)}\approx\tfrac{1}{2}a_{(i)}(x_{(i)} - b_{(i)})^2$ must be updated on the fly. There are potentially many methods for doing so. The approach we take exploits the linear relationship $\nabla f_{(i)} \approx a_{(i)}(x_{(i)} - b_{(i)})$.  Consequently, we may estimate $a_{(i)},b_{(i)}$ via linear regression on the fly: for each dimension, we maintain exponential moving averages (EMA) $\{\overline{g}_{k,(i)},\overline{g^2}_{k,(i)},\overline{x}_{k,(i)},\overline{x^2}_{k,(i)},\overline{xg}_{k,(i)}\}$ where $g_{k,(i)}=\nabla f_{\gamma_k}(x_k)_{(i)}$. For example, $\overline{g}_{k+1,(i)}=\beta_{k,(i)}\overline{g}_{k,(i)} + (1-\beta_{k,(i)})g_{k,(i)}$.
The EMA decay parameter $\beta_{k,(i)}$ controls the effective averaging window size. We adaptively adjust it so that it is small when gradient variations are large, and vice versa. We employ the heuristic
$\beta_{k+1,(i)} = {(\overline{g^2}_{k,(i)} - \overline{g}_{k,(i)}^2)}/{\overline{g^2}_{k,(i)}}$. This is similar to the approach in~\citet{schaul2013no}. 
We also clip each $\beta_{k+1,(i)}$ to $[\beta_\text{min},\beta_\text{max}]$ to improve stability. Here, we use $[0.9,0.999]$ for all experiments, but we checked that performance is insensitive to these values. 
We can now compute $a_{k,(i)},b_{k,(i)}$ by the ordinary-least-squares formula and $\Sigma_{k,(i)}$ as the variance of the gradients:
\begin{align}
a_{k,(i)} &= \frac{\overline{gx}_{k,(i)}-\overline{g}_{k,(i)}\overline{x}_{k,(i)}}{\overline{x^2}_{k,(i)}-\overline{x}_{k,(i)}^2},\nonumber\\
\quad b_{k,(i)} &= \overline{x}_{k,(i)} - \frac{\overline{g}_{k,(i)}}{a_{k,(i)}}, \nonumber\\
\Sigma_{k,(i)} &= {\overline{g^2}_{k,(i)}-\overline{g}_{k,(i)}^2}.
\label{eq:lr_running_avg_3}
\end{align}
This allows us to estimate the policy~\eqref{eq:sme_lr_control_solution} as
\begin{equation}
u^*_{k,(i)} = \begin{cases} 
1 \qquad &a_{k,(i)}\leq0,\\ 
\min(1,\tfrac{a_{k,(i)}(\overline{x}_{k,(i)}-b_{k,(i)})^2}{\eta \Sigma_{k,(i)}})\qquad &a_{k,(i)}>0. 
\end{cases}
\label{eq:sme_lr_estimated_policy}
\end{equation}
for $i=1,2,\dots,d$. Since quantities are computed from exponentially averaged sources, we should also update our learning rate policy in the same way. The algorithm is summarized in Alg.~\ref{alg:csgd}. Due to its optimal control origin, we hereafter call this algorithm the {\it controlled SGD} (cSGD)
\begin{algorithm}[bt!]
	\caption{controlled SGD (cSGD)}
	\label{alg:csgd}
	\begin{algorithmic}
		\STATE {\bfseries Hyper-parameters:} $\eta$, $u_0$
		\STATE Initialize $x_0$; $\beta_{0,(i)}=0.9$ $\forall i$
		\FOR{$k=0$ {\bfseries to} $(\#\text{iterations}-1)$}
		\STATE Compute sample gradient $\nabla f_{\gamma_k}(x_k)$
		\FOR{$i=1$ {\bfseries to} $d$}
		\STATE Update EMA $\{\overline{g}_{k,(i)},\overline{g^2}_{k,(i)},\overline{x}_{k,(i)},\overline{x^2}_{k,(i)},\overline{xg}_{k,(i)}\}$ with decay parameter $\beta_{k,(i)}$
		\STATE Compute $a_{k,(i)}$, $b_{k,(i)}$, $\Sigma_{k,(i)}$ using~\eqref{eq:lr_running_avg_3}
		\STATE Compute $u^*_{k,(i)}$ using~\eqref{eq:sme_lr_estimated_policy}
		\STATE $\beta_{k+1,(i)} = {(\overline{g^2}_{k,(i)} - \overline{g}_{k,(i)}^2)}/{\overline{g^2}_{k,(i)}}$ and clip
		\STATE $u_{k+1,(i)} = \beta_{k,(i)} u_{k,(i)} + (1-\beta_{k,(i)}) u^*_{k,(i)}$
		\STATE $x_{k+1,(i)} = x_{k,(i)} - \eta u_{k,(i)} \nabla f_{\gamma_k}(x_k)_{(i)}$
		\ENDFOR
		\ENDFOR
	\end{algorithmic}
\end{algorithm} 
\begin{remark}
Alg.~\ref{alg:csgd} can similarly be applied to mini-batch SGD. Let the batch-size be $M$, which reduces the covariance by $M$ times and so $\eta$ in the SME is replaced by $\eta/M$. However, at the same time estimating $\Sigma_{k}$ from mini-batch gradient sample variances will underestimate $\Sigma(x_k)$ by a factor of $M$. Thus the product $\eta\Sigma_{k}$ remains unchanged and Alg.~\ref{alg:csgd} can be applied with no changes. 
\end{remark}
\begin{remark}
	The additional overheads in cSGD are from maintaining exponential averages and estimating $a_k,b_k,\Sigma_k$ on the fly with the relevant formulas. These are $\mathcal{O}(d)$ operations and hence scalable. Our current rough implementation runs $\sim 40-60\%$ slower per epoch than the plain SGD. This is expected to be improved by optimization, parallelization or updating quantities less frequently. 
\end{remark}
\subsubsection{Performance on Benchmarks}
\label{sec:csgd_experiments}
Let us test cSGD on common deep learning benchmarks. We consider three different models. M0: a fully connected neural network with one hidden layer and ReLU activations, trained on the MNIST dataset~\citep{lecun1998mnist}; C0: a fully connected neural network with two hidden layers and Tanh activations, trained on the CIFAR-10 dataset~\citep{krizhevsky2009learning}; C1: a convolution network with four convolution layers and two fully connected layers also trained on CIFAR-10. Model details are found in SM. F.1. 
In Fig.~\ref{fig:MNIST_test_lr}, we compare the performance of cSGD with Adagrad~\citep{duchi2011adaptive} and Adam~\citep{kingma2015adam} optimizers. We illustrate in particular their sensitivity to different learning rate choices by performing a log-uniform random search over three orders of magnitude. We observe that cSGD is robust to different initial and maximum learning rates (provided the latter is big enough, e.g. we can take $\eta=1$ for all experiments) and changing network structures, while obtaining similar performance to well-tuned versions of the other methods (see also Tab.~\ref{tab:test_acc}). In particular, notice that the best learning rates found for Adagrad and Adam generally differ for different neural networks. On the other hand, many values can be used for cSGD with little performance loss. For brevity we only show the test accuracies, but the training accuracies have similar behavior (see SM. F.5). 
\begin{figure}[t!]
	\centering
	\subfloat[M0 (fully connected NN, MNIST)]{\includegraphics[width=12cm]{./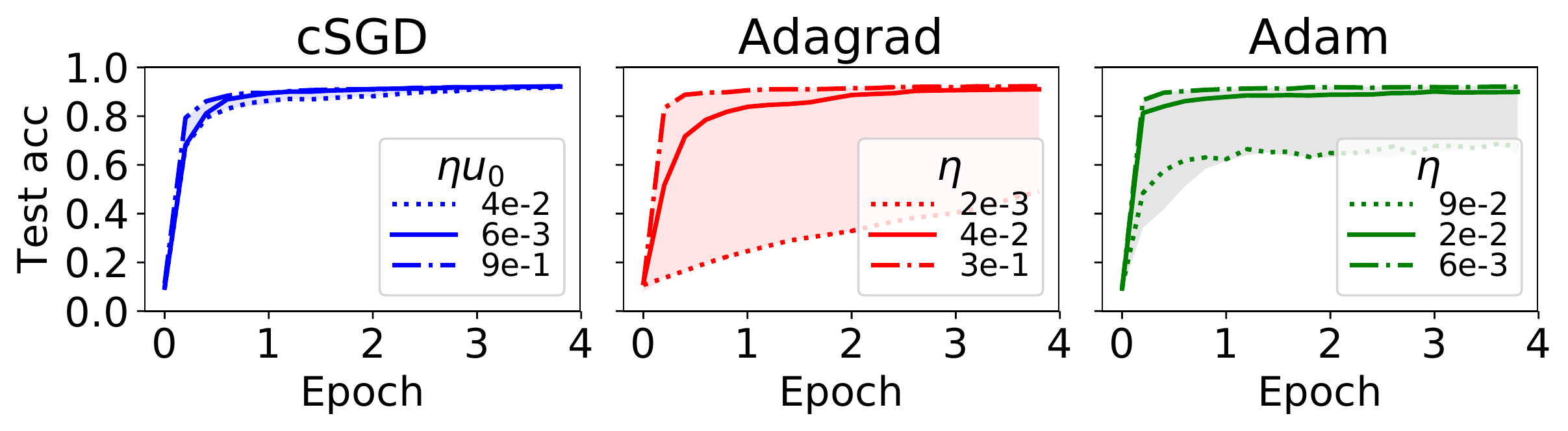}}

	\subfloat[C0 (fully connected NN, CIFAR-10)]{\includegraphics[width=12cm]{./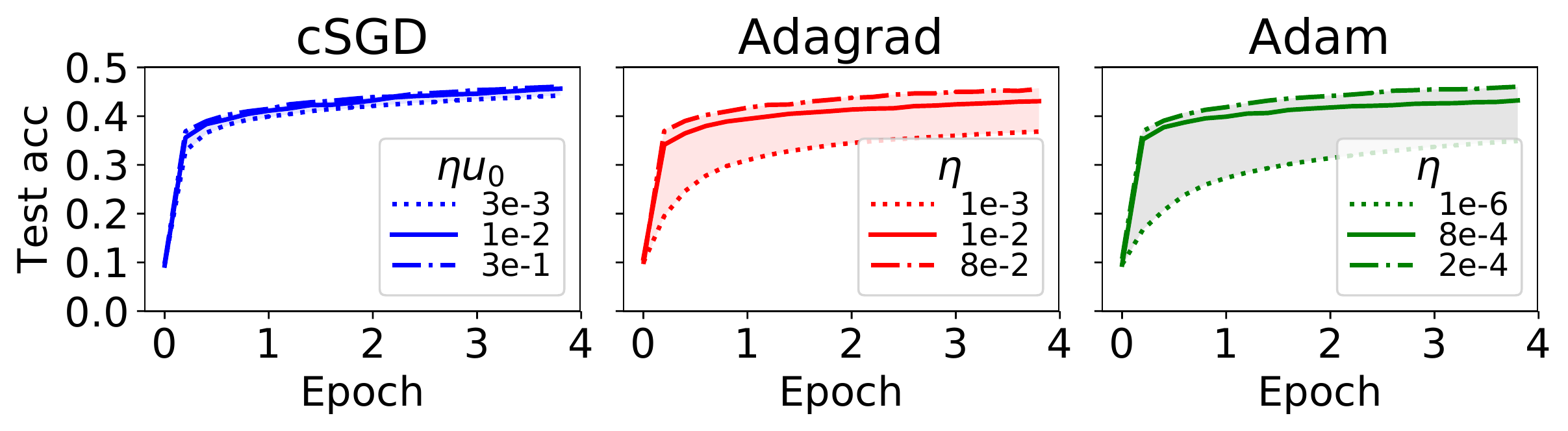}}	
	
	\subfloat[C1 (CNN, CIFAR-10)]{\includegraphics[width=12cm]{./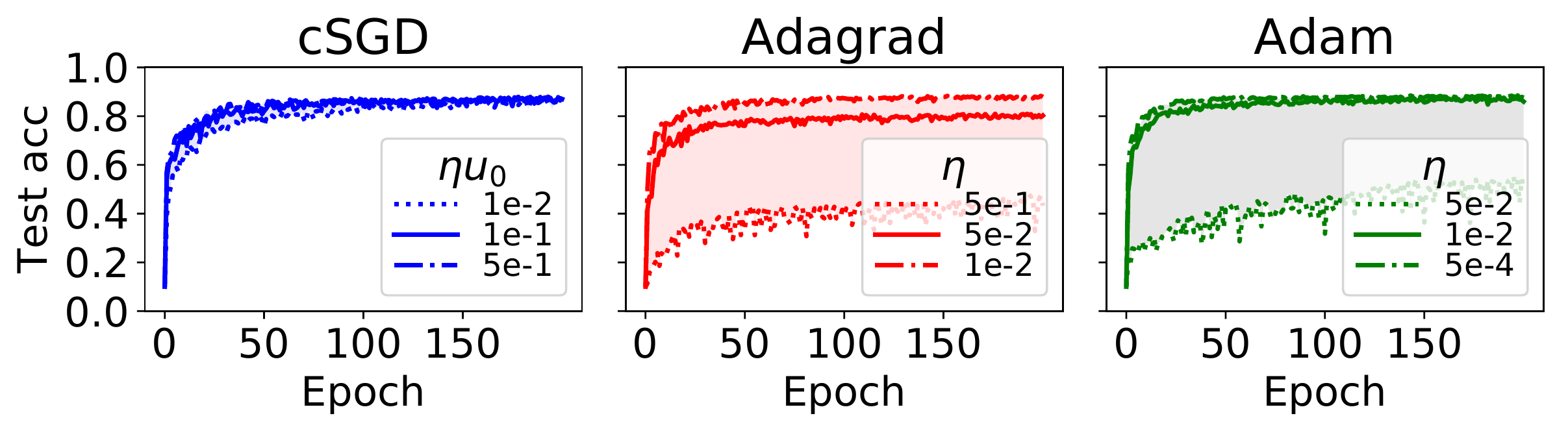}}		
	\caption{
		cSGD vs Adagrad and Adam for different models and datasets, with different hyper-parameters. For M0, we perform log-uniform random search with 50 samples over intervals: cSGD: $u_0\in$[1e-2,1], $\eta\in$[1e-1,1]; Adagrad: $\eta\in$[1e-3,1]; Adam: $\eta\in$[1e-4,1e-1]. For C0, we perform same search over intervals: cSGD: $u_0\in$[1e-2,1], $\eta\in$[1e-1,1]; Adagrad: $\eta\in$[1e-3,1]; Adam: $\eta\in$[1e-6,1e-3]. We average the resulting learning curves for each choice over 10 runs. 
		For C1, due to long training times we choose 5 representative learning rates for each method. cSGD: $\eta\in$\{1e-2,5e-2,1e-1,5e-1,1\}, $u_0=1$; Adagrad: $\eta\in$\{1e-3,5e-3,1e-2,5e-2,1e-1\}; Adam: $\eta\in$\{5e-4,1e-3,1e-2,2e-2,5e-2\}. One sample learning curve is generated for each choice.
		In all cases, we use mini-batches of size 128. We evaluate the resulting learning curves by the area-under-curve. The worst, median and best learning curves are shown as dotted, solid, and dot-dashed lines respectively. The shaded areas represent the distribution of learning curves for all searched values. 
		We observe that cSGD is relatively robust with respect to initial/maximum learning rates and the network structures, and requires little tuning while having comparable performance to well-tuned versions of the other methods (see Tab.~\ref{tab:test_acc}). This holds across different models and datasets. }
	\label{fig:MNIST_test_lr}
\end{figure}
\subsection{Momentum Parameter}
\label{sec:momentum}
Another practical way of speeding up the plain SGD is to employ momentum updates - an idea dating back to deterministic optimization~\cite{polyak1964some,nesterov1983method,qian1999momentum}. However, the stochastic version has important differences, especially in regimes where sampling noise dominates. Nevertheless, provided that the momentum parameter is well-tuned, the momentum SGD (MSGD) is very effective in speeding up convergence, particularly in early stages of training~\citep{sutskever2013importance}. 

Selecting an appropriate momentum parameter is important in practice. Typically, generic values (e.g. 0.9, 0.99) are suggested without fully elucidating their effect on the SGD dynamics. In this section, we use the SME framework to analyze the precise dynamics of MSGD and derive effective adaptive momentum parameter adjustment policies. 

\subsubsection{SME for MSGD}
The SGD with momentum can be written as the following coupled updates
\begin{align}
v_{k+1} &= \mu v_k - \eta f'_{\gamma_k}(x_k), \nonumber\\
x_{k+1} &= x_k + v_{k+1}. 
\label{eq:momentum_iter}
\end{align}
The parameter $\mu$ is the momentum parameter taking values in the range $0\leq\mu\leq 1$. Intuitively, the momentum term $v_k$ remembers past update directions and pushes along $x_k$, which may otherwise slow down at e.g. narrow parts of the landscape. 
The corresponding SME is now a coupled SDE 
\begin{align}
dV_t &= ( -{\eta}^{-1}({1-\mu}) V_t - f'(X_t) )dt + (\eta \Sigma(X_t))^{\frac{1}{2}} dW_t, \nonumber\\
dX_t &= {\eta}^{-1}V_t dt. 
\label{eq:momentum_sme}
\end{align}
This can be derived by comparing~\eqref{eq:momentum_iter} with the Euler discretization scheme of~\eqref{eq:momentum_sme} and matching moments. Details can be found in SM. D.3. 
\subsubsection{The Effect of Momentum}
\label{sec:effect_mom_quad}
As in Sec.~\ref{sec:control_lr}, we take the prototypical example $f(x)=\tfrac{1}{2}a(x-b)^2$ with $\Sigma$ constant and study the effect of incorporating momentum updates. Define $M_t=(\E f(X_t), \E V^2_t, \E V_tf'(X_t))\in \mathbb{R}^3$.
By applying It\^{o} formula to~\eqref{eq:momentum_sme}, we obtain the ODE system
\begin{align}
\dot{M}_t = A(\mu)M_t + &B,\nonumber\\
A(\mu) = \left(\begin{smallmatrix}0 & 0 & {a}/{\eta}\\0 & -{2}(1-\mu)/{\eta} & -2\\-2 & {1}/{\eta} & -(1-\mu)/{\eta}\end{smallmatrix}\right)&,
\, B = \left(\begin{smallmatrix}0\\\eta \Sigma\\0\end{smallmatrix}\right).
\label{eq:M_eqn}
\end{align}
If $a<0$, $A(\mu)$ has a positive eigenvalue and hence $M_t$ diverges exponentially. Since $f$ is negative, its value must then decrease exponentially for all $\mu$, and the descent rate is maximized at $\mu=1$. The more interesting case is when $a>0$. Instead of solving~\eqref{eq:M_eqn}, we observe that all eigenvalues of $A(\mu)$ have negative real parts as long as $\mu<1$. Therefore, $M_t$ has an exponential decay dominated by $\vert \mathcal{R}\lambda(\mu)\vert$, where $\mathcal{R}$ denotes real part and 
$\lambda(\mu) = -\tfrac{1}{\eta} [{(1-\mu) - \sqrt{(1-\mu)^2 - 4a\eta}}]$
is the eigenvalue with the least negative real part.
Observe that the descent rate $\vert \mathcal{R}\lambda(\mu)\vert$ is maximized at 
\begin{equation}
\mu_{\text{opt}} =  \max(1-2\sqrt{a\eta},0)
\label{eq:mu_opt}
\end{equation}
and when $\mu>\mu_{\text{opt}}$, $\lambda$ becomes complex. 
Also, from~\eqref{eq:M_eqn} we have $M_t\rightarrow M_{\infty} = -A(\mu)^{-1}B = \left(\begin{smallmatrix}\frac{\eta  \Sigma}{4(1-\mu) }& \frac{\eta ^2 \Sigma}{2(1-\mu) } & 0\end{smallmatrix}\right)$, provided the steady state is stable. 
The role of momentum in this problem is now clear. To leading order in $\eta$ we have $\lambda(\mu)\sim -2a/(1-\mu)$ for $\mu\leq\mu_{\text{opt}}$. Hence, any non-zero momentum will improve the initial convergence rate. In fact, the choice $\mu_\text{opt}$ is optimal and above it, oscillations set in because of a complex $\lambda$. At the same time, increasing momentum also causes increment in eventual fluctuations, since $\vert M_\infty\vert = \mathcal{O}((1-\mu)^{-1})$. In Fig.~\ref{fig:mom_compare}(a), we demonstrate the accuracy of the SME prediction~\eqref{eq:M_eqn} by comparing MSGD iterations.
\begin{figure}
	\centering
	\subfloat[]{\includegraphics[width=6cm]{./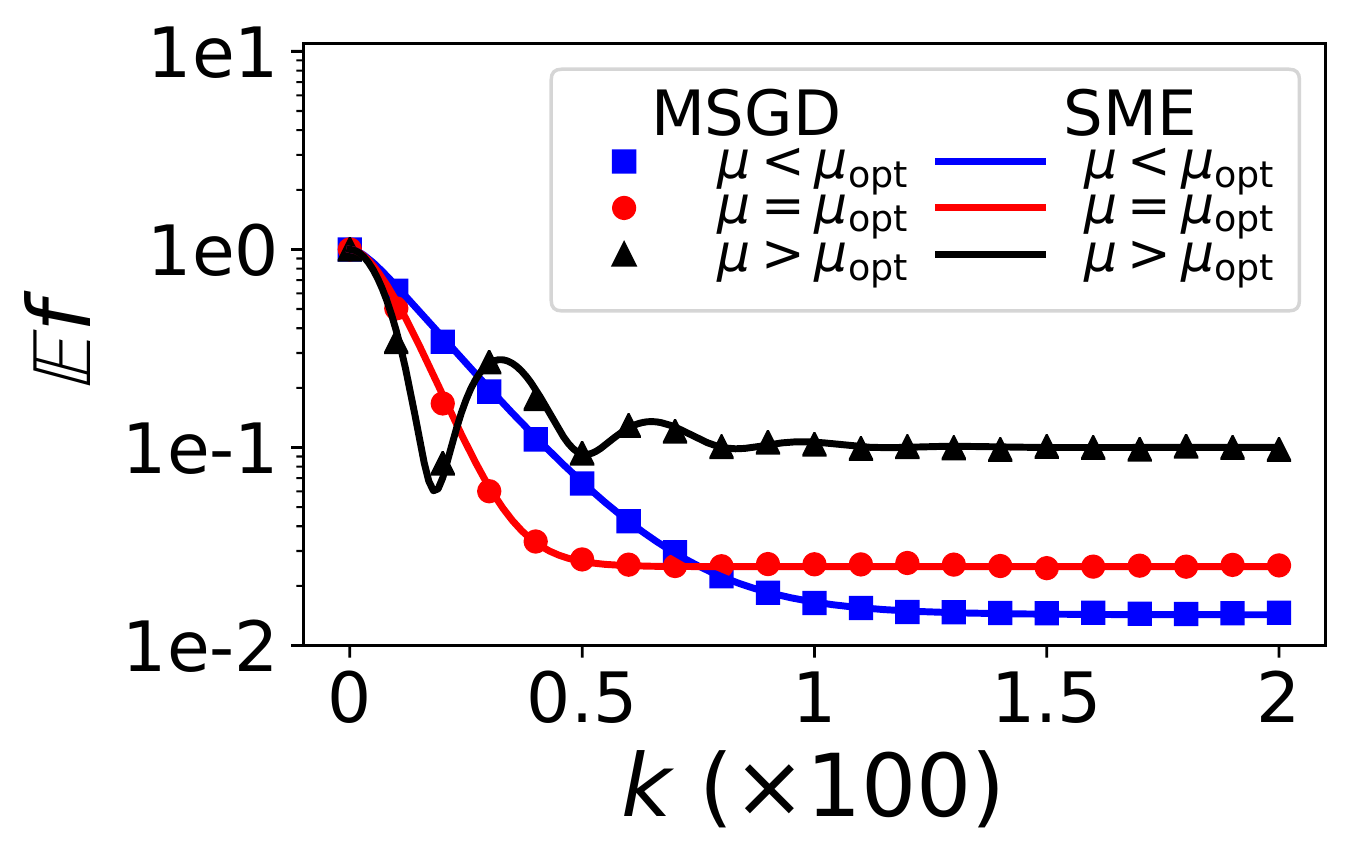}}
	\subfloat[]{\includegraphics[width=6cm]{./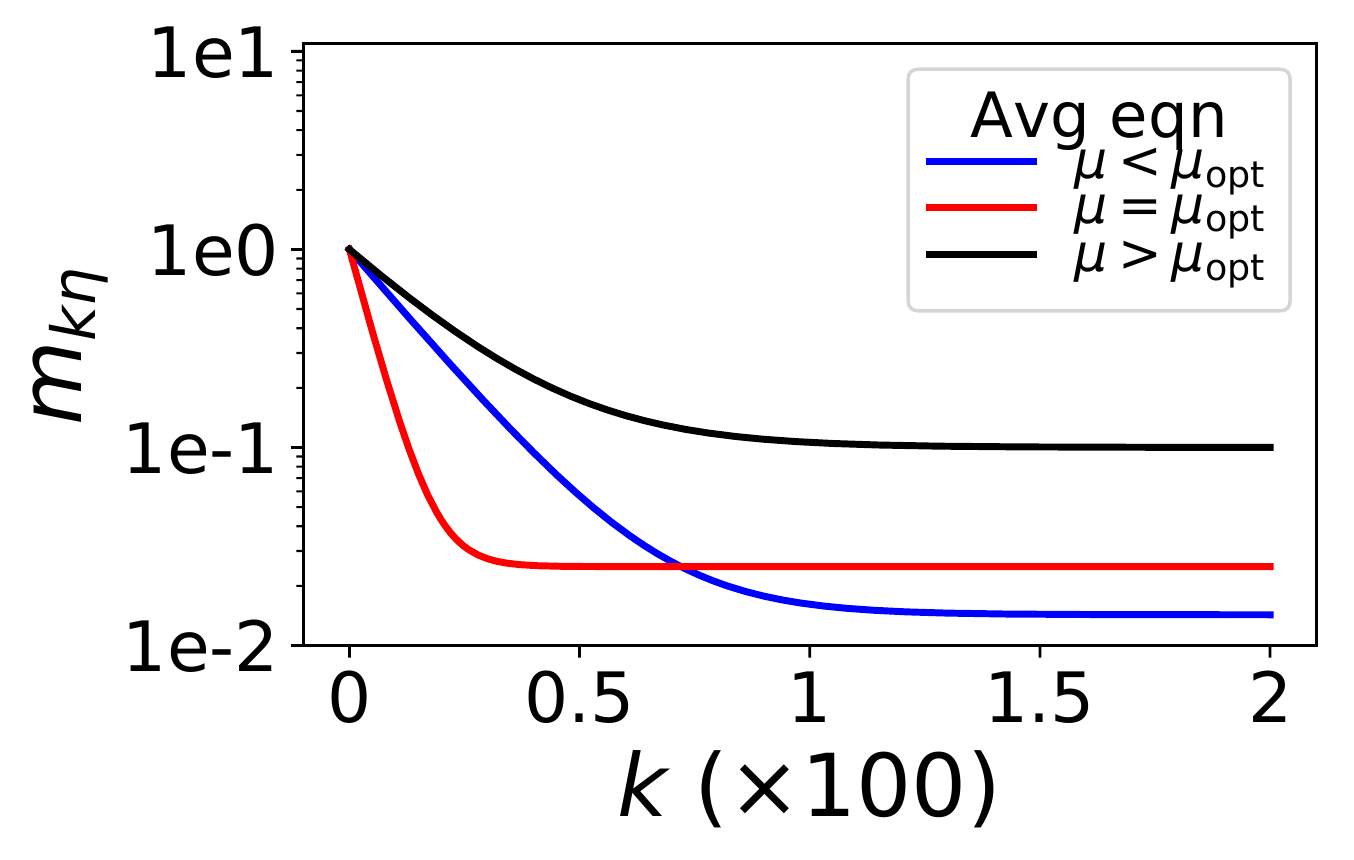}}	
	\caption{(a) Comparison of the SME prediction~\eqref{eq:M_eqn} with SGD for the same quadratic example in Sec.~\ref{sec:dynamics_example}, which has $a=2$, $b=0$ and $\Sigma=4$. We set $\eta=$5e-3 so that $\mu_{\text{opt}}=0.8$. We plot the mean of $f$ averaged over 1e5 SGD runs against the SME predictions for $\mu=0.65,0.8,0.95$. We observe that in all cases the approximation is accurate. In particular, the SME correctly predicts the effect of momentum: $\mu=\mu_{\text{opt}}$ gives the best average initial descent rate, $\mu>\mu_{\text{opt}}$ causes oscillatory behavior, and increasing $\mu$ generally increases asymptotic fluctuations. (b) The dynamics of averaged equation~\eqref{eq:m_eqn}, which serves as an approximation of the solution of the full SME moment equation~\eqref{eq:M_eqn}. }
	\label{fig:mom_compare}
\end{figure}
Armed with an intuitive understanding of the effect of momentum, we can now use optimal control to design policies to adapt the momentum parameter.

\subsubsection{Optimal Control of the Momentum Parameter}
\label{sec:momentum_opt_control}
For $a<0$, we have discussed previously that $\mu=1$ maximizes the descent rate and fluctuations generally help decrease concave functions. Thus, the optimal control is always $\mu=1$. The non-trivial case is when $a>0$. Due to its bi-linearity, directly controlling~\eqref{eq:M_eqn} leads to bang-bang type solutions\footnote{Bang-bang solutions are control solutions lying on the boundary of the control set and abruptly jumps among the boundary values. For example, in this case it jumps between $\mu=0$ and $\mu=1$ repeatedly.} that are rarely feed-back laws~\cite{pardalos2010optimization} and thus difficult to apply in practice. Instead, we notice that the descent rate is dominated by $\mathcal{R}\lambda(\mu)$, and the leading order asymptotic fluctuations is $\eta\Sigma/(4(1-\mu))$, hence we may consider
\begin{equation}
	\dot{m}_t = \mathcal{R}\lambda(\mu)(m_t - m_\infty(\mu))
	\label{eq:m_eqn}
\end{equation}
where $m_t\in\mathbb{R}$ and $m_\infty(\mu) = {\eta\Sigma}/{(4(1-\mu))}$ is the leading order estimate of $\vert M_\infty \vert$. Equation~\eqref{eq:m_eqn} can be understood as the approximate evolution, in an averaged sense, of the magnitude of $M_t$. Fig.~\ref{fig:mom_compare}(b) shows that~\eqref{eq:m_eqn} is a reasonable approximation of the dynamics of MSGD. This allows us to pose the optimal control problem on the momentum parameter as
\begin{equation*}
\min_{\mu:[0,T]\rightarrow[0,1]} m_T \text{ subject to \eqref{eq:m_eqn}},
\end{equation*}
with $\mu=\mu_t$. 
Solving this control problem yields the (approximate) feed-back policy (SM. E.4)
\begin{equation}
\mu^*_t = \begin{cases} 
1 \qquad &a\leq0,\\ 
\min(\mu_{\text{opt}},\max(0,1-\tfrac{\eta\Sigma}{4m_t}))\qquad &a>0,
\end{cases}
\label{eq:sme_mom_control_solution}
\end{equation}
with $\mu_\text{opt}$ given in~\eqref{eq:mu_opt}. This says that when far from optimum ($m_t$ large), we set $\mu=\mu_\text{opt}$ which maximizes average descent rate. When $m_t/\eta\Sigma\sim \sqrt{a\eta}$, fluctuations set in and we lower $\mu$. 

As in Sec.~\ref{sec:appl_via_localapprox}, we turn the control policy above into a generally applicable algorithm by performing local diagonal-quadratic approximations and estimating the relevant quantities on the fly. The resulting algorithm is mostly identical to Alg.~\ref{alg:csgd} except we now use \eqref{eq:sme_mom_control_solution} to update $\mu_{k,(i)}$ and SGD updates are replaced with MSGD updates (see S.M. F.4 for the full algorithm). We refer to this algorithm as the {\it controlled momentum SGD} (cMSGD). 

\subsubsection{Performance on Benchmarks}
\label{sec:cmsgd_experiments}
We apply cMSGD to the same three set-ups in Sec.~\ref{sec:csgd_experiments}, and compare its performance to the plain Momentum SGD with fixed momentum parameters (MSGD) and the annealing schedule suggested in~\cite{sutskever2013importance}, with $\mu_k=\min(1-2^{-1-\log_2(\lfloor k/ 250\rfloor +1)},\mu_\text{max})$ (MSGD-A).
In Fig.~\ref{fig:compare_cmsgd}, we perform a log-uniform search over the hyper-parameters $\mu_0$, $\mu$ and $\mu_\text{max}$. We see that cMSGD achieves superior performance to MSGD and MSGD-A (see Tab.~\ref{tab:test_acc}), especially when the latter has badly tuned $\mu,\mu_\text{max}$. Moreover, it is insensitive to the choice of initial $\mu_0$. Just like cSGD, this holds across changing network structures. Further, cMSGD also adapts to other hyper-parameter variations. In Fig.~\ref{fig:mom_lr_sens}, we take tuned $\mu,\mu_\text{max}$ (and any $\mu_0$) and vary the learning rate $\eta$. We observe that cMSGD adapts to the new learning rates whereas the performance of MSGD and MSGD-A deteriorates and $\mu,\mu_\text{max}$ must be re-tuned to obtain reasonable accuracy. In fact, it is often the case that MSGD and MSGD-A diverge when $\eta$ is large, whereas cMSGD remains stable. 
\begin{figure}[t!]
	\centering
	\subfloat[M0 (fully connected NN, MNIST)]{\includegraphics[width=12cm]{./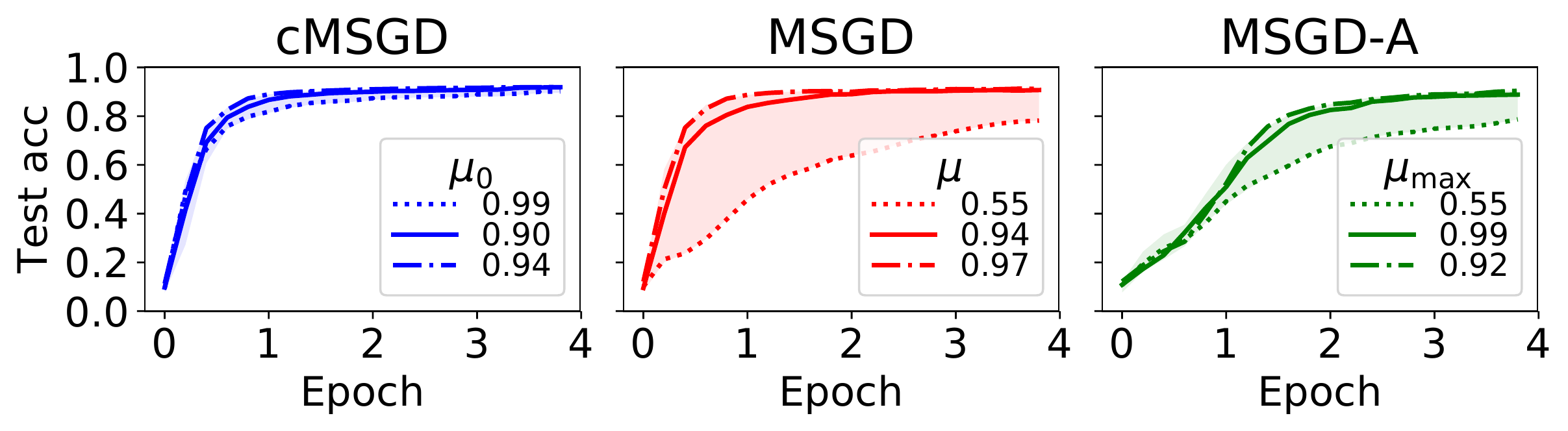}}
	
	\subfloat[C0 (fully connected NN, CIFAR-10)]{\includegraphics[width=12cm]{./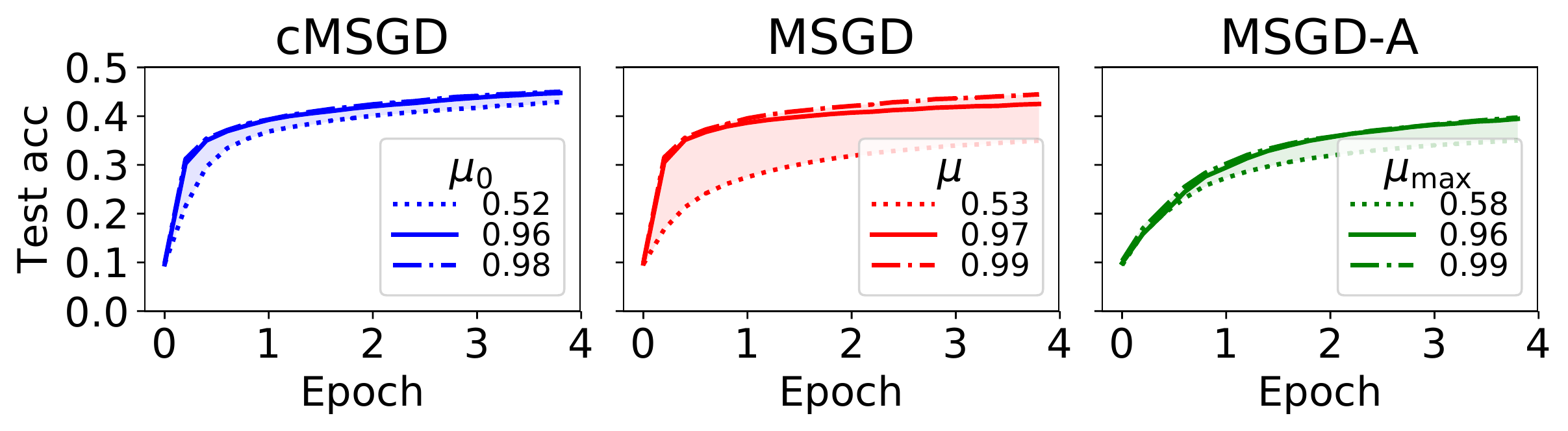}}

	\subfloat[C1 (CNN, CIFAR-10)]{\includegraphics[width=12cm]{./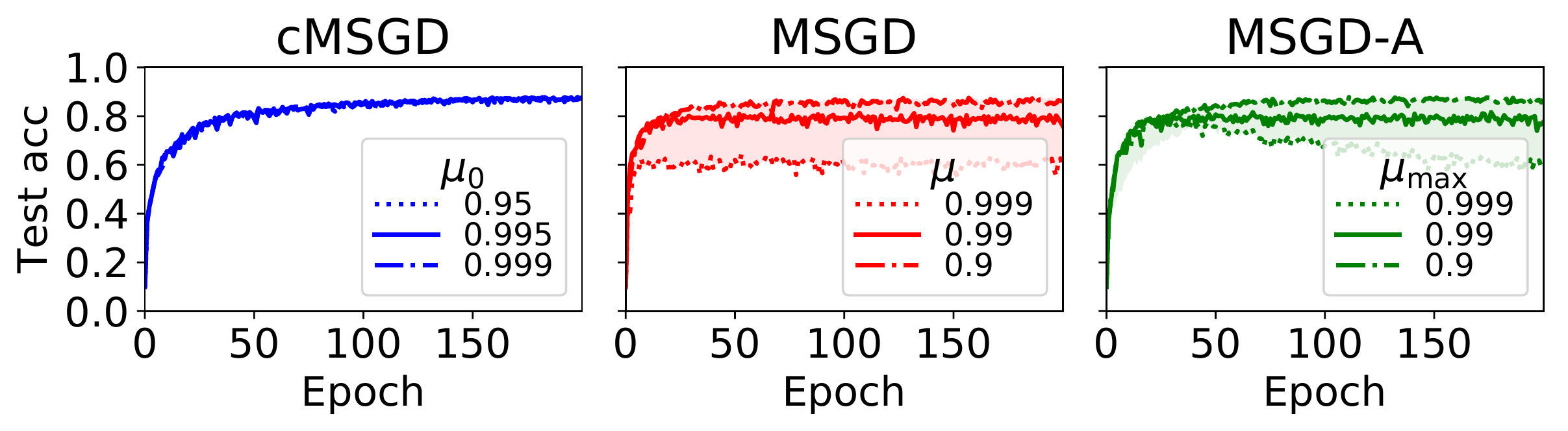}}
	\caption{cMSGD vs MSGD and MSGD-A on the same three models. We set $\eta=$1e-2 for M0 and $\eta=$1e-3 for C0, C1. For M0 and C0, we perform a log-uniform random search for $1-\mu_0$ and $1-\mu$ in [5e-3,5e-1]. For C1, we sample $\mu_0,\mu,\mu_\text{max}\in$\{0.9,0.95,0.99,0.995,0.999\}. The remaining set-up is identical to that in Fig.~\ref{fig:MNIST_test_lr}. Again, we observe that cMSGD is an adaptive scheme that is robust to varying hyper-parameters and network structures, and out-performs MSGD and MSGD-A. }
	\label{fig:compare_cmsgd}
\end{figure}
\begin{figure}[t!]
	\centering
	\subfloat[M0 (fully connected NN, MNIST)]{\includegraphics[width=12cm]{./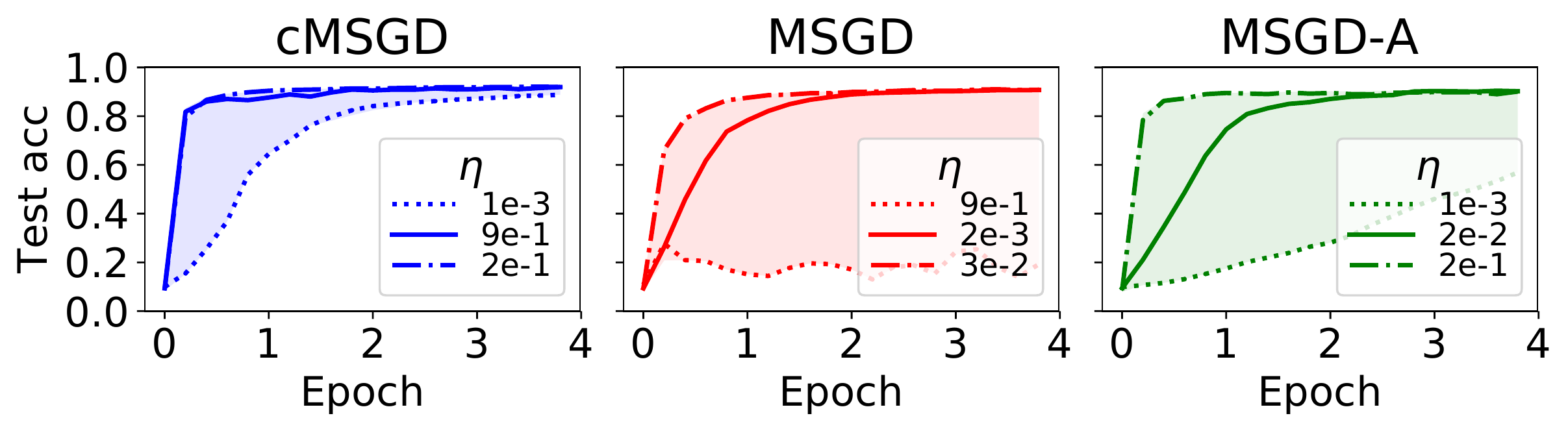}}
	
	\subfloat[C0 (fully connected NN, CIFAR-10)]{\includegraphics[width=12cm]{./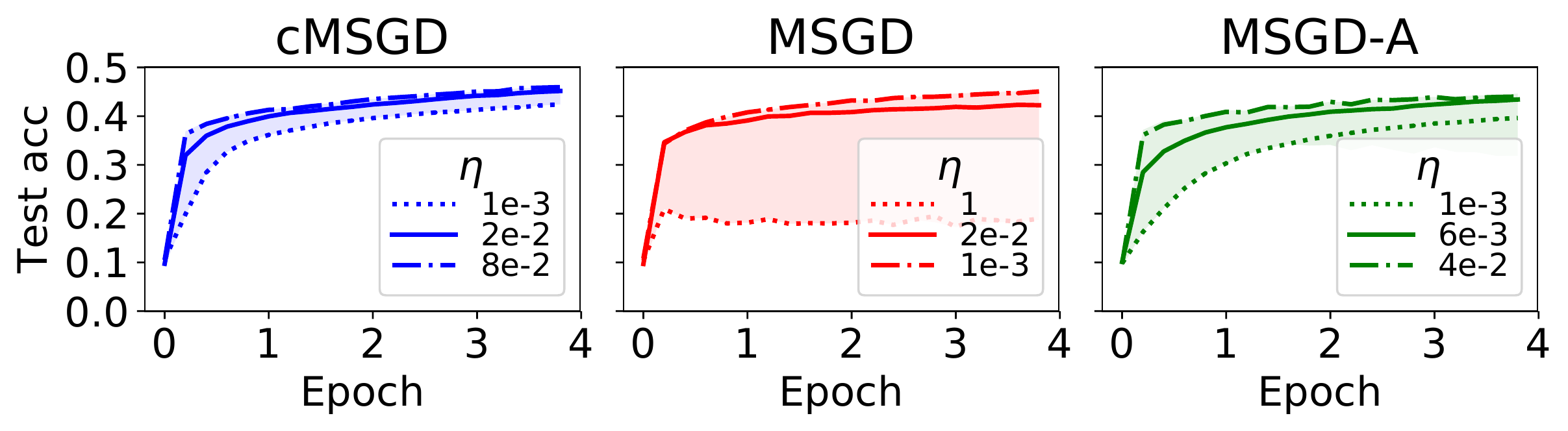}}
	
	\subfloat[C1 (CNN, CIFAR-10)]{\includegraphics[width=12cm]{./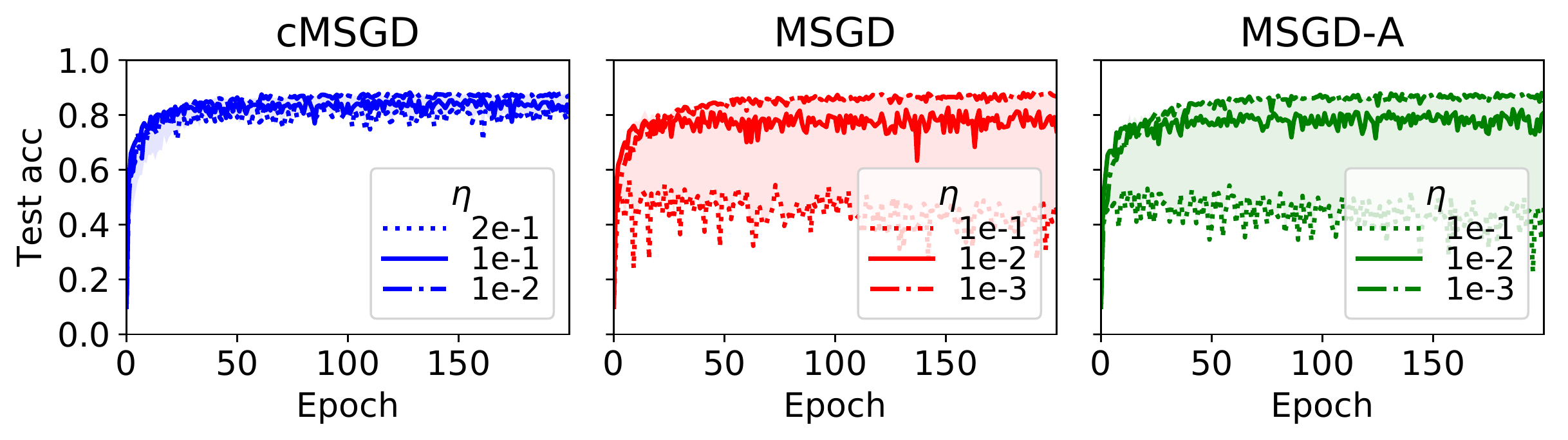}}	
	\caption{Comparing the sensitivity of cMSGD, MSGD and MSGD-A to different learning rates. The set-up is same as that in Fig.~\ref{fig:compare_cmsgd} except that for MSGD and MSGD-A, we now fix $\mu,\mu_\text{max}$ to be the best values found in Fig.~\ref{fig:compare_cmsgd} for each experiment, but we vary the learning rate in the ranges: M0 and C0: $\eta\in$[1e-3,1], C1: $\eta\in$\{1e-3,5e-3,1e-2,2e-2,1e-1\}. For cMSGD, we saw from Fig.~\ref{fig:compare_cmsgd} that the value of $\mu_0$ is mostly inconsequential, so we simply set $\mu_0=0$ and vary $\eta$ in the same ranges. We observe that the unlike MSGD and MSGD-A, cMSGD is generally robust to changing learning rates and this further confirms its adaptive properties.}
	\label{fig:mom_lr_sens}
\end{figure}
\begin{table}
	\caption{Best average test accuracy found by random/grid search.}
	\label{tab:test_acc}
	\vskip 0.15in
	\begin{center}
		\begin{tabular}{lccc||cccr}
			\hline
			& cSGD & Adagrad & Adam & cMSGD & MSGD & MSGD-A \\
			\hline
			M0    & \textbf{0.925}	& 0.923 &	0.924 & \textbf{0.924} & 0.914 & 0.908 \\
			C0    & \textbf{0.461}	& 0.457	& 0.460	& \textbf{0.461} & 0.453 & 0.446 \\					
			C1    & 0.875 & 0.878 & \textbf{0.881} & \textbf{0.876} & 0.868 & 0.869 \\
			\hline
		\end{tabular}
	\end{center}
\end{table}
\section{Related Work}
\label{sec:related_work}
Classical bound-type convergence results for SGD and variants include
\citet{moulines2011non,shamir2013stochastic,bach2013non,needell2014stochastic,xiao2014proximal,shalev2014accelerated}.
Our approach differs in that we obtain precise, albeit only distributional, descriptions of the SGD dynamics that hold in non-convex situations. 

In the vein of continuous approximation to stochastic algorithms, a related body of work is stochastic approximation theory~\citep{kushner2003stochastic,ljung2012stochastic}, which establish ODEs as almost sure limits of trajectories of stochastic algorithms. In contrast, we obtain SDEs that are weak limits that approximate not individual sample paths, but their distributions. Other deterministic continuous time approximation methods include~\citet{su2014differential,krichene2015accelerated,Wibisono22112016}.

Related work in SDE approximations of the SGD are~\citet{mandt2015continuous,mandt2016variational}, where the authors derived the first order SME heuristically. In contrast, we establish a rigorous statement for this type of approximations (Thm.~\ref{thm:sme}). Moreover, we use asymptotic analysis and control theory to translate understanding into practical algorithms. Outside of the machine learning literature, similar modified equation methods also appear in numerical analysis of SDEs~\citep{zygalakis2011existence} and quantifying uncertainties in ODEs~\citep{conrad2015probability}. 

The second half of our work deals with practical problems of adaptive selection of the learning rate and momentum parameter. There is abundant literature on learning rate adjustments, including annealing schedules~\cite{robbins1951stochastic,moulines2011non,xu2011towards,shamir2013stochastic}, adaptive per-element adjustments~\cite{duchi2011adaptive,zeiler2012adadelta,hinton2012lecture,kingma2015adam} and meta-learning~\cite{andrychowicz2016learning}. Our approach differs in that optimal control theory provides a natural, non-black-box framework for developing dynamic feed-back adjustments, allowing us to obtain adaptive algorithms that are truly robust to changing model settings. Our learning rate adjustment policy is similar to~\citet{schaul2013no,schaul2013adaptive} based on one-step optimization, although we arrive at it from control theory. Our method may also be easier to implement because it does not require estimating diagonal Hessians via back-propagation. 
There is less literature on momentum parameter selection. A heuristic annealing schedule (referred to as MSGD-A earlier) is suggested in~\citet{sutskever2013importance}, based on the original work of~\citet{nesterov1983method}. The choice of momentum parameter in deterministic problems is discussed in~\citet{qian1999momentum,nesterov2013introductory}. To the best of our knowledge, a systematic stochastic treatment of adaptive momentum parameter selection for MSGD has not be considered before. 

\section{Conclusion and Outlook}
\label{sec:conclusion}
Our main contribution is twofold. First, we propose the SME as a unified framework for quantifying the dynamics of SGD and its variants, beyond the classical convex regime. Tools from stochastic calculus and asymptotic analysis provide precise dynamical description of these algorithms, which help us understand important phenomena, such as descent-fluctuation transitions and the nature of acceleration schemes. Second, we use control theory as a natural framework to derive adaptive adjustment policies for the learning rate and momentum parameter. This translates to robust algorithms that requires little tuning across multiple datasets and model choices. 

An interesting direction of future work is extending the SME framework to develop adaptive adjustment schemes for other hyper-parameters in SGD variants, such as Polyak-Ruppert Averaging~\citep{polyak1992acceleration}, SVRG~\citep{johnson2013accelerating} and elastic averaging SGD~\citep{zhang2015deep}. More generally, the SME framework may be a promising methodology for the analysis and design of stochastic gradient algorithms and beyond. 



\newpage
\appendix
\onecolumn
\section{Modified equations in the numerical analysis of PDEs}
\label{app:sec:modified_eqn_PDE}
The method of modified equations is widely applied in finite difference methods in numerical solution of PDEs~\cite{hirt1968heuristic,noh1960difference,daly1963stability,warming1974modified}.  
In this section, we briefly demonstrate this classical method. Consider the one dimensional transport equation
\begin{equation}
\frac{\partial{u}}{\partial{t}} = c \frac{\partial{u}}{\partial{x}} 
\label{eq:transport}
\end{equation}
where $u:[0,T]\times[0,L]\rightarrow \mathbb{R}$ represents a density of some material in $[0,L]$ and $c>0$ is the transport velocity. It is well-known that the simple forward-time-central-space differencing leads to instability for all discretization step-sizes~\citep{leveque2002finite}. Instead, more sophisticated differencing schemes must be used. 

We set time and space discretization steps to $\Delta t$ and $\Delta x$ and denote $u(n\Delta t,j\Delta x) = U_{n,j}$ for $1\leq n \leq N$ and $1\leq j \leq J$. The simplest scheme that can exhibit stability is the {\it upwind scheme}~\citep{courant1952solution}, where we approximate~\eqref{eq:transport} by the difference equation
\begin{equation}
U_{n+1,j} = U_{n,j} + \Delta t \left(c^+ \frac{U_{n,j+1}-U_{n,j}}{\Delta x} + c^- \frac{U_{n,j}-U_{n,j-1}}{\Delta x}  \right),
\label{eq:upwind}
\end{equation}
where $c^+ = \max (c,0)$ + $c^- = \min (c,0)$. The idea is to now approximate this difference scheme by another continuous PDE, that is not equal to the original equation~\eqref{eq:transport} for non-zero $\Delta x,\Delta t$. This can be done by Taylor expanding each term in~\eqref{eq:upwind} around $u(t,x)=U_{n,j}$. Simplifying and truncating to leading term in $\Delta t,\Delta x$, we obtain the modified equation
\begin{equation}
\frac{\partial u}{\partial t} - c \frac{\partial u}{\partial x}= \frac{1}{2} c \Delta x (1-r) \frac{\partial^2 u}{\partial x^2},
\label{eq:upwind_me}
\end{equation}
where $r=c\Delta t/\Delta x$ is the Courant-Friedrichs-Lewy (CFL) number~\citep{courant1952solution}. Notice that in the limit $\Delta t, \Delta x \rightarrow 0$ with $r$ fixed, one recovers the original transport equation, but for finite step sizes, the upwind scheme is really described by the modified equation~\eqref{eq:upwind_me}. In other words, this truncated equation describes the leading order, non-trivial behavior of the finite difference scheme.

From the modified equation~\eqref{eq:upwind_me}, one can immediately deduce a number of interesting properties. First, the error from the upwind scheme is diffusive in nature, due to the presence of the second order spatial derivative on the right hand side. Second, we observe that if the CFL number $r$ is greater than 1, then the coefficient for the diffusive term becomes negative and this results in instability. This is the well-known {\it CFL condition}. This places a fundamental limit on the spatial resolution for fixed temporal resolution with regards to the stability of the algorithm. Lastly, the error term is proportional to $\Delta x$ for fixed $r$, thus it may be considered a first order method. 

Now, another possible proposal for discretizing~\eqref{eq:transport} is the Lax-Wendroff (LW) scheme~\citep{lax1960systems}: 
\begin{align}
U_{n+1,j} &= U_{n,j} + \Delta t \left(c \frac{U_{n,j+1}-U_{n,j-1}}{2\Delta x} - c\Delta t \frac{U_{n,j+1}-2U_{n,j}+U_{n,j-1}}{2\Delta x^2}\right),
\label{eq:lf}
\end{align}
whose modified equation is
\begin{equation}
\frac{\partial u}{\partial t} - c \frac{\partial u}{\partial x} = \frac{1}{6} c \Delta x^2 (r^2-1) \frac{\partial^3 u}{\partial x^3}.
\label{eq:lf_me}
\end{equation}
Comparing with~\eqref{eq:upwind_me}, we observe that the LW scheme error is of higher order ($\Delta x^2$), but at the cost of introducing dispersive, instead of diffusive errors due to the presence of the third derivative. These findings are in excellent agreement with the actual behavior of their respective discrete numerical schemes~\citep{warming1974modified}. 

We stress here that if we simply took the trivial leading order, the right hand sides of 
~\eqref{eq:upwind_me} and \eqref{eq:lf_me} disappear and vital information, including stability, accuracy and the nature of the error term will be lost. The ability to capture the effective dynamical behavior of finite difference schemes is the key strength of the modified equations approach, which has become the primary tool in analyzing and improving finite difference algorithms. The goal of our work is to extend this approach to analyze stochastic algorithms. 

\section{Summary of SDE terminologies and results}
\label{app:sec:SDE_stoch_calculus}
Here, we summarize various SDE terminologies and results we have used throughout the main paper and also subsequent derivations. A particular important result is the It\^{o} formula (Sec.~\ref{app:sec:ito_formula}), which is used throughout this work for deriving moment equations. For a thorough reference on the subject of stochastic calculus and SDEs, we suggest~\cite{oksendal2013stochastic}.

\subsection{Stochastic differential equations}
Let $T>0$. An It\^{o} stochastic differential equation on the interval $[0,T]$ is an equation of the form
\begin{equation}
dX_t = b(X_t,t)dt + \sigma(X_t,t)dW_t, \qquad X_0 = x_0,
\label{app:eq:sde_general}
\end{equation}
where $X_t\in \mathbb{R}^d$, $b:\mathbb{R}^d\times [0,T]\rightarrow \mathbb{R}^d$, $\sigma:\mathbb{R}^d\times [0,T]\rightarrow \mathbb{R}^{d\times l}$ and $W_t$ is a $l$-dimensional Wiener process, or Brownian motion. This is a mathematically sound way of expressing the intuitive notion of SDEs being ODEs plus noise:
\begin{equation}
\dot{X}_t = b(X_t,t) + \text{``noise''}
\end{equation}
The equation~\eqref{app:eq:sde_general} is really a ``shorthand'' for the integral equation
\begin{equation}
X_t - x_0 = \int_{0}^{t} b(X_s,s) ds + \int_{0}^{t} \sigma(X_s,s)dW_s.
\label{app:eq:sde_general_integral}
\end{equation}
The last integral is defined in the It\^{o} sense, i.e.
\begin{equation}
\int_{0}^{t} F_s dW_s := \lim_{n\rightarrow\infty} \sum_{[s_{i-1},s_{i}]\in\pi_n} F_{s_{i-1}}(W_{s_{i}}-W_{s_{i-1}}),
\end{equation}
where $\pi_n$ is a sequence of $n$-partitions of $[0,t]$ and the limit represents convergence in probability. 
In~\eqref{app:eq:sde_general}, $b$ is known as the {\it drift}, and $\sigma$ is known as the {\it diffusion matrix}. When they satisfy Lipschitz conditions, one can show that~\eqref{app:eq:sde_general} (or~\eqref{app:eq:sde_general_integral}) has a unique strong solution (\cite{oksendal2013stochastic}, Chapter 5). For our purposes in this paper, we consider the special case where $b,\sigma$ do not depend on time and we set $d=l$ so that $\sigma$ is a square matrix.

To perform calculus, we need an important result that generalizes the notion of chain rule to the stochastic setting. 

\subsection{It\^{o} formula}
\label{app:sec:ito_formula}
It\^{o} formula, also known as It\^{o}'s lemma, is the extension of the chain rule of ordinary calculus to the stochastic setting. Let $\phi \in C^{2,1}(\mathbb{R}^d\times[0,T])$ and let $X_t$ be a stochastic process satisfying the SDE~\eqref{app:eq:sde_general}, and thus~\eqref{app:eq:sde_general_integral}. Then, the stochastic process $\phi(X_t,t)$ is again an It\^{o} process satisfying
\begin{align}
d\phi(X_t,t) =&  \left[ \partial_t \phi(X_t,t) + \left(\nabla \phi(X_t,t)^T b(X_t,t) + \frac{1}{2}\text{Tr}[\sigma(t,X_t)^T H\phi(t,X_t) \sigma(t,X_t)] \right)\right]dt \nonumber\\
& + \left[\nabla \phi(X_t,t)^T \sigma(X_t,t)\right] dW_t,
\label{app:eq:ito_formula}
\end{align}
where $\nabla$ denotes gradient with respect to the first argument and $H\phi$ denotes the Hessian, i.e. $H\phi_{(ij)} = \partial^2 \phi / \partial x_{(i)}\partial x_{(j)}$. The formula~\eqref{app:eq:ito_formula} is the It\^{o} formula. If $\phi$ is not a scalar but a vector, then each of its component satisfy~\eqref{app:eq:ito_formula}. Note that if $\sigma=0$, this reduces to the chain rule of ordinary calculus.

\subsection{The Ornstein-Uhlenbeck process}
\label{app:sec:ou}
An important solvable SDE is the Ornstein-Uhlenbeck (OU) process~\cite{uhlenbeck1930theory}. Consider $d=1$, $b(x,t)= \theta (\xi-x)$ and $\sigma(x,t)=\sigma>0$, with $\theta>0$, $\sigma>0$ and $\xi\in\mathbb{R}$. Then we have the SDE
\begin{equation}
dX_t = \theta(\xi - X_t) dt + \sigma dW_t, \qquad X_0 = x_0.
\label{app:eq:ou_sde}
\end{equation}
To solve this equation, we change variables $x\mapsto\phi(x,t)=x e^{\theta t}$. Applying It\^{o} formula, we have
\begin{equation}
d\phi(X_t,t) = \theta \xi e^{\theta t} dt + \sigma e^{\theta t} dW_t,
\end{equation}
which we can integrate from $0$ to $T$ to get
\begin{equation}
X_t = x_0 e^{-\theta t}  + \xi (1-e^{-\theta t}) + \sigma \int_{0}^{t} e^{-\theta(t-s)} dW_s.
\label{app:eq:ou_pathwise_soln}
\end{equation}
This is a path-wise solution to the SDE~\eqref{app:eq:ou_sde}. To infer distributional properties, we do not require such precise solutions. In fact, we only need the distribution of the random variable $X_t$ at any fixed time $t\in[0,T]$. Observe that $X_t$ is really a Gaussian process, since the integrand in the Wiener integral is deterministic. Hence, we need only calculate its moments. Taking expectation on~\eqref{app:eq:ou_pathwise_soln}, we get 
\begin{equation}
\E X_t = x_0 e^{-\theta t} + \xi (1-e^{-\theta t}).
\end{equation}
To obtain the covariance function, we see that 
\begin{equation}
\E (X_t-\E X_t) (X_s-\E X_s) = \sigma^2 \E \left[\int_{0}^{t}e^{\theta(u-s)}dW_u\int_{0}^{t}e^{\theta(v-s)}dW_v\right].
\end{equation}
This can be evaluated by using {\it It\^{o}'s isometry}, which says that for any $W_t$ adapted process $\phi_t,\psi_t$, we have
\begin{equation}
\E \left[\int_{0}^{t}\phi_u dW_u   \int_{0}^{t}\psi_v dW_v  \right] = \E \left[\int_{0}^{t} \phi_s \psi_s ds\right]. 
\end{equation}
We get, for $s\leq t$
\begin{equation}
\text{cov}(X_s,X_t) = \frac{\sigma^2}{2\theta}\left(e^{-\theta\vert t-s\vert} + e^{-\theta\vert t+s\vert}\right),
\end{equation}
and in particular, for fixed $t\in [0,T]$, we have
\begin{equation}
\text{Var} (X_t) = \frac{\sigma^2}{2\theta}( 1 - e^{-2\theta t} ). 
\end{equation}
Hence, we have
\begin{equation}
X_t \sim \mathcal{N} \left(x_0 e^{-\theta t} + \xi (1-e^{-\theta t}),  \frac{\sigma^2}{2\theta}( 1 - e^{-2\theta t} )\right).
\end{equation}
In Sec. 3.1 in the main paper, the solution of the SME is the OU process with $\theta=2(1+\eta),\xi=0,\sigma=2\sqrt{\eta}$. Making these substitutions, we obtain
\begin{equation}
X_t \sim \mathcal{N} \left(x_0 e^{-2(1+\eta)t},\frac{\eta}{1+\eta} \left(1-e^{-4(1+\eta)t}\right)\right).\nonumber
\label{eq:app:ou_dist}
\end{equation}

\subsection{Numerical solution of SDEs}
\label{app:sec:numerical}
Unfortunately, most SDEs are not amenable to exact solutions. Often, we resort to numerical methods. The simplest method is the {\it Euler-Maruyama method}. This extends the Euler method for ODEs to SDEs. Fix a time discretization size $\delta>0$ and define $\tilde{X}_k = X_{k\delta}$, then we can iterate the finite difference equation
\begin{equation}
\tilde{X}_{k+1} = \tilde{X}_{k} + \delta b(\tilde{X}_k,k\delta) + \sigma(\tilde{X}_k,k\delta) (W_{(k+1)\delta}-W_{k\delta}).
\end{equation}
By definition, $W_{(k+1)\delta}-W_{k\delta} \sim \mathcal{N}(0,\delta I)$, and are independent for each $k$. Here, $I$ is the identity matrix. Hence, we have the Euler-Maruyama scheme
\begin{equation}
\tilde{X}_{k+1} = \tilde{X}_{k} + \delta b(\tilde{X}_k,k\delta) + \sqrt{\delta}\sigma(\tilde{X}_k,k\delta)Z_k,
\label{app:eq:em_method}
\end{equation}
where $Z_k \overset{i.i.d.}{\sim} \mathcal{N}(0,I)$.

One can show that the Euler-Maruyama method~\eqref{app:eq:em_method} is a first order weak approximation (c.f. Def. 1 in main paper) to the SDE~\eqref{app:eq:sde_general}. However, it is only a order $1/2$ scheme in the strong sense~\citep{Kloeden1992}, i.e. 
\begin{equation}
\mathbb{E}\vert X_{k\delta} - \tilde{X}_{k\delta}\vert < C \delta^\frac{1}{2}.
\end{equation}
With more sophisticated methods, one can design higher order schemes (both in the strong and weak sense), see~\cite{mil1986weak}. 

\subsection{Stochastic asymptotic expansion}
Besides numerics, if there exists small parameters in the SDE, we can proceed with stochastic asymptotic expansions~\cite{freidlin2012random}. This is the case for the SME, which has a small $\eta^{1/2}$ multiplied to the noise term. Let us consider a time-homogeneous SDE of the form
\begin{equation}
dX^{\epsilon}_t = b(X^{\epsilon}_t)dt + \epsilon \sigma(X^{\epsilon}_t)dW_t
\label{app:eq:asymp_sde}
\end{equation}
where $\epsilon\ll 1$. The idea is to follow standard asymptotic analysis and write $X^{\epsilon}_t$ as an asymptotic series
\begin{equation}
X^{\epsilon}_t = X_{0,t} + \epsilon X_{1,t} + \epsilon^2 X_{2,t} + \dots.
\label{app:eq:asymp_exp}
\end{equation}
We substitute~\eqref{app:eq:asymp_exp} into~\eqref{app:eq:asymp_sde} and assuming smoothness of $b$ and $\sigma$, we expand
\begin{align}
b_{\epsilon}(X^{\epsilon}_t) &= b(X_{0,t}) + \epsilon \nabla b(X_{0,t})X_{1,t} + \mathcal{O}(\epsilon^2) \nonumber\\
\sigma(X^{\epsilon}_t) &= \sigma(X_{0,t}) + \epsilon \nabla \sigma(X_{0,t})X_{1,t} + \mathcal{O}(\epsilon^2)
\end{align}
to get 
\begin{align}
dX_{0,t} &= b(X_{0,t}) dt, \nonumber \\
dX_{1,t} &= \nabla b(X_{0,t})X_{1,t} dt + \sigma(X_{0,t}) dW_t, \nonumber \\
&\vdots
\label{app:eq:asymp_2_gen}
\end{align}
and $X_{0,0}=x_0,X_{1,0}=0$. In general, the equation for $X_{i,t}$ are linear stochastic differential equations with time-dependent coefficients depending on $\{X_{0,t},X_{1,t},\dots,X_{i-1,t}\}$ and the initial conditions are $X_{0,0}=x_0$, $X_{i,0}=0$ for all $i\geq 1$. Hence, the asymptotic equations can be solved sequentially to obtain an estimate of $X_t$ to arbitrary order in $\epsilon$. The equations for higher order terms become messy quickly, but they are always linear in the unknown, as long as all the previous equations are solved. For more details on stochastic asymptotic expansions, the reader is referred to~\cite{freidlin2012random}. 

\subsection{Asymptotics of the SME}
We now derive the first two asymptotic equations of the SME. we take $\epsilon=\sqrt{\eta}$, $b=-\nabla f$ ($\mathcal{O}(\eta)$ term can be ignored for first two terms) and $\sigma=\Sigma^{1/2}$. Then,~\eqref{app:eq:asymp_2_gen} becomes 
\begin{align}
dX_{0,t} &= -\nabla f(X_{0,t}) dt,  \label{eq:asymp_0}\\
dX_{1,t} &= -Hf(X_{0,t})X_{1,t} dt + \Sigma(X_{0,t})^\frac{1}{2}dW_t, \label{eq:asymp_1}
\end{align}
where $Hf_{(ij)}=\partial_{(i)}\partial_{(j)}f$ is the Hessian of $f$. 

In the following analysis, we shall assume that the truncated series approximation
\begin{equation}
\hat{X}_t = X_{0,t}+\sqrt{\eta}X_{1,t},
\end{equation}
where $X_{0,t},X_{1,t}$ satisfy~\eqref{eq:asymp_0} and~\eqref{eq:asymp_1}, describes the leading order stochastic dynamics of the SGD. Now, let us analyze the asymptotic equations in detail. First, we assume that the ODE~\eqref{eq:asymp_0} has a unique solution $X_{0,t},t\geq 0$ with $X_{0,0}=x_0$. This is true if for example, $\nabla f$ is locally Lipschitz. Next, let us define the non-random functions
\begin{align}
H_t &= Hf(X_{0,t}),\nonumber\\
\Sigma_t &= \Sigma(X_{0,t}).
\label{eq:nonrandom}
\end{align}
Both $H$ and $\sigma$ are $d\times d$ matrices for each $t$. Then,~\eqref{eq:asymp_1} becomes the time-inhomogeneous linear SDE
\begin{equation}
dX_{1,t} = -H_t X_{1,t} + \Sigma_t^{\frac{1}{2}} dW_t,
\end{equation}
with $X_{1,0}=0$. Since the drift is linear and the diffusion matrix is constant (i.e. independent of $X_{1,t}$), $X_{1,t}$ is a Gaussian process. Hence we need only calculate its mean and covariance using It\^{o} formula (see~\ref{app:sec:ito_formula}). We have
\begin{equation}
\E X_{1,t} = 0,
\end{equation}
and the covariance matrix $S_t=\text{Cov}(X_{1,t})$ satisfies the differential equation
\begin{equation}
\frac{d}{dt} S_t = -S_tH_t-H_tS_t + \Sigma_t,
\label{eq:general_cov}
\end{equation}
with $S_0=0$. This equation is a linearized version of the {\it Riccati equation} and there are simple closed-form solutions under special conditions, e.g. $d=1$ or $H_t$ is constant.

Hence, we conclude that the asymptotic approximation $\hat{X}_t$ is a Gaussian process with distribution
\begin{equation}
\hat{X}_t \sim \mathcal{N} (X_{0,t},\eta S_t),
\label{eq:general_distr}
\end{equation}
where $X_{0,t}$ solves the ODE~\eqref{eq:asymp_0} and $S_t$ solves the ODE~\eqref{eq:general_cov}, with $H_t,\Sigma_t$ given by~\eqref{eq:nonrandom}. 

\begin{remark}
	At this point, it is important to discuss the validity of the asymptotic approximation~\eqref{eq:general_distr}, and the SME approximation~\eqref{eq:sme} in general. What we prove in Sec.~\ref{app:sec:proof} and is shown in~\cite{freidlin2012random} is that for fixed $T$, we can take $\eta=\eta(T)$ small enough so that the SME and its asymptotic expansion is a good approximation of the distribution of the SGD iterates. What we did not prove is that for fixed $\eta$, the approximations hold for arbitrary $T$. In particular, it is not hard to construct systems where for fixed $\eta$, both the SME and the asymptotic expansion fails when $T$ is large enough. To prove the second general statement requires further assumptions, particularly on the distribution of $f_i$'s. This is out of the scope of the current work. 
\end{remark}

\section{Formal Statement and proof of Thm. 1}
\label{app:sec:proof}
\begin{theorem} [Stochastic modified equations]
	\label{thm:sme}
	Let $\alpha\in\{1,2\}$, $0<\eta<1$, $T>0$ and set $N=\lfloor T/\eta\rfloor$. Let $x_k\in\mathbb{R^d}$, $0\leq k\leq N$ denote a sequence of SGD iterations defined by (2). Define $X_t\in \mathbb{R}^d$ as the stochastic process satisfying the SDE
	\begin{equation}
	dX_{t} = -\nabla (f(X_t) + \frac{1}{4}(\alpha-1)\eta \vert \nabla f(X_t) \vert^2)dt + (\eta\Sigma(X_t))^{\frac{1}{2}}dW_t
	\label{eq:sme}
	\end{equation}
	$X_0=x_0$ and $\Sigma(x) = \frac{1}{n} \sum_{i=1}^n (\nabla f(x) - \nabla f_i(x))(\nabla f(x) - \nabla f_i(x))^T$.
	
	Fix some test function $g\in G$ (c.f. Def. 1 in main paper). Suppose further that the following conditions are met:
	
	\begin{enumerate}[(i)]
		\item $\nabla f,\nabla f_i$ satisfy a Lipschitz condition: there exists $L>0$ such that
		\begin{equation}
		\vert \nabla f(x)-\nabla f(y) \vert + \sum_{i=1}^{n} \vert \nabla f_i(x)-\nabla f_i(y) \vert   \leq L \vert x-y \vert .\nonumber
		\end{equation}
		\item $f,f_i$ and its partial derivatives up to order $7$ belong to $G$.
		\item $\nabla f,\nabla f_i$ satisfy a growth condition: there exists $M>0$ such that 
		\begin{equation}
		\vert \nabla f(x) \vert + \sum_{i=1}^{n} \vert \nabla f_i(x) \vert   \leq M (1+\vert x \vert) .\nonumber	
		\end{equation}
		\item $g$ and its partial derivatives up to order $6$ belong to $G$.
	\end{enumerate} 
	Then, there exists a constant $C>0$ independent of $\eta$ such that for all $k=0,1,\dots,N$, we have 
	\begin{equation}
	\vert \mathbb{E}g(X_{k\eta}) - \mathbb{E}g(x_k) \vert \leq C \eta^\alpha.\nonumber
	\end{equation}
	That is, the equation~\eqref{eq:sme} is an order $\alpha$ weak approximation of the SGD iterations. 
\end{theorem}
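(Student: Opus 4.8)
The plan is to use the standard backward-Kolmogorov (semigroup) method for weak error analysis, as in~\cite{milstein1995numerical,Kloeden1992}, with the twist that here the discrete ``scheme'' is the SGD iteration and the ``exact'' process is the SME~\eqref{eq:sme} that we have designed. Fix a test function $g$ as in the statement and a target index $k_0\le N$, and let $\mathcal{L}$ be the generator of~\eqref{eq:sme}, $\mathcal{L}\phi = -\nabla\tilde f\cdot\nabla\phi + \tfrac{\eta}{2}\sum_{ij}\Sigma_{(ij)}\partial_{x_{(i)}x_{(j)}}\phi$ with $\tilde f = f + \tfrac14(\alpha-1)\eta|\nabla f|^2$; note that only the smooth coefficient $\eta\Sigma$, not its matrix square root, enters $\mathcal{L}$. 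Define $u(x,m) = \E[g(X_{k_0\eta})\mid X_{m\eta}=x]$ for $0\le m\le k_0$, so that by the Markov property $u(x,m) = \E[u(X_{(m+1)\eta},m+1)\mid X_{m\eta}=x]$ and, in continuous time, $\partial_t u + \mathcal{L}u = 0$. Writing the weak error as the telescoping sum
\begin{equation*}
\E g(X_{k_0\eta}) - \E g(x_{k_0}) = \sum_{m=0}^{k_0-1}\E\bigl[\,u(x_m,m) - u(x_{m+1},m+1)\,\bigr],
\end{equation*}
it suffices to bound each summand, conditionally on $x_m$, by $C(x_m)\eta^{\alpha+1}$ with $C\in G$, and then accumulate over the at most $T/\eta$ terms.

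Two auxiliary ingredients are needed. First, regularity: under assumptions (i)--(iv) the function $u(\cdot,m)$ is sufficiently smooth (say $C^{2\alpha+2}$) with all partial derivatives in $G$, \emph{uniformly in} $\eta\in(0,1)$ and in $m$. This is the usual propagation of smoothness and polynomial growth of the SDE coefficients and of $g$ to the Kolmogorov solution; uniformity in $\eta$ is available because $0<\eta<1$ keeps the diffusion small and bounded, so the a priori $L^p$ estimates for the SME and for its derivative flows carry constants depending only on $T$, the Lipschitz constant $L$, the growth constant $M$ and the $G$-constants of the high derivatives of $f,f_i,g$. Second, a uniform moment bound $\sup_{0\le k\le N}\E|x_k|^{2q}\le C_q$ with $C_q$ independent of $\eta$: since $x_{k+1}-x_k = -\eta\nabla f_{\gamma_k}(x_k)$, condition (iii) gives $\E|x_{k+1}|^{2q}\le(1+c\eta)\E|x_k|^{2q}+c\eta$, and discrete Gronwall over $k\le T/\eta$ yields the claim.

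The heart of the argument is the one-step estimate. Conditioning on $x_m$, I would Taylor-expand $u(x_{m+1},m+1)$ in space around $x_m$ (increment $\Delta := x_{m+1}-x_m = -\eta\nabla f_{\gamma_m}(x_m)$) and in time around $m\eta$ (increment $\eta$), take the conditional expectation term by term using $\E[\Delta\mid x_m]=-\eta\nabla f(x_m)$, $\E[\Delta_{(i)}\Delta_{(j)}\mid x_m]=\eta^2(\partial_{(i)}f\,\partial_{(j)}f+\Sigma_{(ij)})$, and the bounds $\E[|\Delta|^s\mid x_m]\le \eta^s M^s(1+|x_m|)^s$ to control the higher-order terms (all of order $\eta^3$, handled by the Taylor remainder and the polynomial growth of the derivatives of $u$), and eliminate time derivatives via $\partial_t u=-\mathcal{L}u$ and $\partial_t^2 u=\mathcal{L}^2 u$. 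Collecting by powers of $\eta$: for $\alpha=1$ the $\mathcal{O}(\eta)$ contribution is exactly $\eta(\partial_t u-\nabla f\cdot\nabla u)=-\tfrac{\eta^2}{2}\Sigma\!:\!Hu=\mathcal{O}(\eta^2)$, so the one-step error is $\mathcal{O}(\eta^2)$. For $\alpha=2$ the $\mathcal{O}(\eta^2)$ contributions must cancel: a short computation, using $\nabla|\nabla f|^2=2Hf\,\nabla f$ and the identity $(\nabla f\cdot\nabla)^2u=(Hf\,\nabla f)\cdot\nabla u+(\nabla f\,\nabla f^{T})\!:\!Hu$, shows that the correction $\tfrac14\eta|\nabla f|^2$ in $\tilde f$ is chosen precisely so that the residual $\tfrac{\eta^2}{2}[(Hf\,\nabla f)\cdot\nabla u-\Sigma\!:\!Hu]$ coming from the modified drift in $\partial_t u$ is exactly cancelled by the contribution $-\tfrac{\eta^2}{2}[(Hf\,\nabla f)\cdot\nabla u-\Sigma\!:\!Hu]$ from the second-order Taylor terms $\tfrac12\E[\Delta\Delta^{T}\mid x_m]\!:\!Hu$, $-\nabla f\cdot\nabla\partial_t u$ and $\tfrac12\partial_t^2 u$, leaving a one-step error of $\mathcal{O}(\eta^3)$. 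In both cases the bound has the form $C(x_m)\eta^{\alpha+1}$ with $C\in G$; taking expectations, using the uniform moment bound to control $\E C(x_m)$, and summing over $m<k_0\le T/\eta$ gives $|\E g(X_{k_0\eta})-\E g(x_{k_0})|\le C\eta^\alpha$ with $C$ independent of $\eta$ and of $k_0$, which is the theorem (the minor discrepancy between $N\eta$ and $T$ is absorbed since we argue for every $k_0\le N$).

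I expect the main obstacle to be the regularity ingredient --- establishing that $u(\cdot,m)$ has enough derivatives in $G$ with all constants independent of $\eta$ --- since this is where assumptions (i)--(iv) on the high derivatives of $f,f_i,g$ are consumed, and where one must ensure the possibly degenerate diffusion $(\eta\Sigma)^{1/2}$ does not spoil the estimates; phrasing all estimates through the generator $\mathcal{L}$, which sees only the smooth coefficient $\eta\Sigma$, rather than through path-space derivatives of the stochastic flow, is how I would keep this under control. The $\alpha=2$ cancellation is the conceptual crux of why the $\tfrac14\eta|\nabla f|^2$ correction appears, but once the expansion is set up it is a finite, mechanical verification.
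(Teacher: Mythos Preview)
Your proposal is correct and is essentially the same argument as the paper's, but organized differently. The paper does not carry out the backward-Kolmogorov telescoping argument directly; instead it quotes Milstein's one-step-to-global theorem (Theorem~\ref{app:thm:milstein}, from \cite{mil1986weak}) as a black box and then verifies its hypotheses by computing the first few moments of the one-step increments on both sides: Lemma~\ref{app:lem:d} expands $\E\prod_j\Delta_{(i_j)}$ for the SDE via a semigroup expansion of the generator, Lemma~\ref{app:lem:dbar} does the trivial computation for the SGD step, and the choice $b=-\nabla(f+\tfrac14(\alpha-1)\eta|\nabla f|^2)$, $\sigma=\Sigma^{1/2}$ is precisely what makes the moments match to the required order. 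What you call the ``main obstacle'' --- uniform-in-$\eta$ regularity of $u(\cdot,m)$ --- is exactly the content that the paper outsources to Milstein's result, and the $\alpha=2$ cancellation you verify by hand is, in the paper's language, the moment-matching identities $\E\Delta_{(i)}=\E\bar\Delta_{(i)}+\mathcal{O}(\eta^3)$ and $\E\Delta_{(i)}\Delta_{(j)}=\E\bar\Delta_{(i)}\bar\Delta_{(j)}+\mathcal{O}(\eta^3)$. Your route is more self-contained and makes explicit where the drift correction comes from; the paper's is shorter and modular but hides the telescoping machinery and the regularity estimates inside the citation.
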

The basic idea of the proof is similar to the classical approach in proving weak convergence of discretization schemes of SDEs outlined in the seminal papers by Milstein (\cite{mil1975approximate,mil1979method,mil1986weak,milstein1995numerical}). The main difference is that we wish to establish that the continuous SME is an approximation of the discrete SGD, instead of the other way round, which is the case dealt by classical approximation theorems of SDEs with finite difference schemes. In the following, we first show that a one-step approximation has order $\eta^{\alpha+1}$ error, and then deduce, using the general result in~\cite{mil1986weak}, that the overall global error is of order $\eta^\alpha$. 

It is well known that a second order weak convergence discretization scheme for a SDE is not trivial. The classical Euler-Maruyama scheme, as well as the Milstein scheme are both first order weak approximations. However, in our case the problem simplifies significantly. This is because the noise we are trying to model is small, so that from the outset, we may assume that $b(x)=\mathcal{O}(1)$ but $\sigma(x)=\mathcal{O}(\eta^{1/2})$, i.e. we set $\sigma(x)=\eta^{1/2}\tilde{\sigma}(x)$ where $\tilde{\sigma}=\mathcal{O}(1)$ and deduce the appropriate expansions. For brevity, in the following we will drop the tilde and simply denote the noise term of the SDE by $\eta^{1/2}\sigma$. 

In the subsequent proofs we will make repeated use of Taylor expansions in powers of $\eta$. To simplify presentation, we introduce the shorthand that whenever we write $\mathcal{O}(\eta^\alpha)$, we mean that there exists a function $K(x)\in G$ (c.f. Def.~1 in main text) such that the error terms are bounded by $K(x)\eta^\alpha$. For example, we write
\begin{equation}
b(x+\eta) = b_0(x)+\eta b_1(x) + \mathcal{O}(\eta^2) 
\end{equation}
to mean: there exists $K \in G$ such that
\begin{equation}
\vert b(x+\eta) - b_0(x) - \eta b_1(x) \vert \leq K(x) \eta^2. 
\end{equation}
These results can be deduced easily using Taylor's theorem with a variety of forms of the remainder, e.g. Lagrange form. We omit such routine calculations. We also denote the partial derivative with respect to $x_{(i)}$ by $\partial_{(i)}$. 

First, let us prove a lemma regarding moments of SDEs with small noise.  
\begin{lemma}
	\label{app:lem:d}
	Let $0<\eta<1$. Consider a stochastic process $X_t$, $t\geq 0$ satisfying the SDE
	\begin{equation}
	dX_t = b(X_t)+\eta^{\frac{1}{2}}\sigma(X_t)dW_t
	\label{app:eq:sde}
	\end{equation}
	with $X_0 = x\in\mathbb{R}^d$ and $b,\sigma$ together with their derivatives belong to $G$. Define the one-step difference $\Delta = X_{\eta} - x$, then we have
	\begin{enumerate}[(i)]
		\item $\E \Delta_{(i)} = b_{(i)}\eta + \frac{1}{2}
		[\sum_{j=1}^{d}b_{(j)}\partial_{(j)}b_{(i)}]\eta^2 + \mathcal{O}(\eta^3)$.
		\item $\E \Delta_{(i)}\Delta_{(j)} = [b_{(i)}b_{(j)}+\sigma\sigma^T_{(ij)}]\eta^2 + \mathcal{O}(\eta^3)$.
		\item $\E \prod_{j=1}^s \Delta_{(i_j)} = \mathcal{O}(\eta^3)$ for all $s\geq 3$, $i_j = 1,\dots,d$.
	\end{enumerate}
	All functions above are evaluated at $x$. 
\end{lemma}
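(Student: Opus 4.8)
The plan is to derive the three moment estimates directly from the integral form of the SDE \eqref{app:eq:sde}, namely $\Delta = X_\eta - x = \int_0^\eta b(X_s)\,ds + \eta^{1/2}\int_0^\eta \sigma(X_s)\,dW_s$, by iterating It\^{o}'s formula (Sec.~\ref{app:sec:ito_formula}) on the integrands to expand them in powers of $\eta$. Concretely, I would write $b(X_s) = b(x) + \int_0^s \mathcal{L}b(X_r)\,dr + \eta^{1/2}\int_0^s \nabla b(X_r)^T\sigma(X_r)\,dW_r$, where $\mathcal{L}$ is the generator of \eqref{app:eq:sde}; note that the second-order part of $\mathcal{L}$ carries a factor $\eta$, so $\mathcal{L}b = \sum_j b_{(j)}\partial_{(j)}b + \mathcal{O}(\eta)$ on the relevant components. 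Substituting back and using that It\^{o} integrals of adapted $G$-integrands have zero mean gives $\E\Delta_{(i)} = b_{(i)}\eta + \tfrac12[\sum_j b_{(j)}\partial_{(j)}b_{(i)}]\eta^2 + \mathcal{O}(\eta^3)$, which is (i). The only subtlety is justifying that all remainder terms are controlled by a function in $G$ uniformly in $s\in[0,\eta]$; this follows from standard moment bounds $\E|X_s - x|^{2m} \le C(x)(s^m + (\eta s)^m)$ for SDEs with $G$-coefficients (e.g.\ \cite{milstein1995numerical,oksendal2013stochastic}), combined with the convention on $\mathcal{O}(\eta^\alpha)$ adopted just before the lemma.

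For part (ii), I would expand $\Delta_{(i)}\Delta_{(j)}$ using the bilinear decomposition of $\Delta$ into its drift part $D_{(i)} := \int_0^\eta b_{(i)}(X_s)\,ds$ and martingale part $N_{(i)} := \eta^{1/2}\int_0^\eta (\sigma(X_s)dW_s)_{(i)}$. The cross terms $\E D_{(i)}N_{(j)}$ vanish after conditioning appropriately (the drift integrand is, up to $\mathcal{O}(\eta)$ corrections with controllable moments, predictable while $N$ is a martingale increment), leaving $\E[D_{(i)}D_{(j)}] = b_{(i)}b_{(j)}\eta^2 + \mathcal{O}(\eta^3)$ and, by It\^{o}'s isometry, $\E[N_{(i)}N_{(j)}] = \eta\,\E\int_0^\eta (\sigma\sigma^T(X_s))_{(ij)}\,ds = (\sigma\sigma^T)_{(ij)}\eta^2 + \mathcal{O}(\eta^3)$, where replacing $\sigma\sigma^T(X_s)$ by its value at $x$ incurs only an $\mathcal{O}(\eta^{1/2})$ pointwise error that integrates to $\mathcal{O}(\eta^{5/2}) = \mathcal{O}(\eta^3)$ after using that the leading discrepancy is itself a mean-zero martingale-type term — more carefully, one uses $\E|\sigma\sigma^T(X_s) - \sigma\sigma^T(x)| \le C(x)\sqrt{s}$ but then notes the contribution is $\eta \int_0^\eta C(x)\sqrt{s}\,ds = \mathcal{O}(\eta^{5/2})$, which is absorbed into $\mathcal{O}(\eta^3)$ since $\eta<1$ — wait, $\eta^{5/2} \not\le \eta^3$; I would instead iterate It\^{o} once more on $\sigma\sigma^T$ so that the $\sqrt{s}$ term becomes a mean-zero stochastic integral contributing zero, with the genuine deterministic remainder being $\mathcal{O}(\eta^3)$. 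This bookkeeping is the one place requiring care.

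For part (iii), I would argue by a simple power-counting / Cauchy--Schwarz argument: each factor $\Delta_{(i_j)}$ is a sum of a term of order $\eta$ (drift) and a term of order $\eta^{1/2}$ in $L^2$ (martingale). Using the moment bound $\E|\Delta|^{2m} \le C(x)\eta^m$ and Hölder's inequality, any product of $s\ge 3$ components has expectation bounded by $C(x)\eta^{s/2} \le C(x)\eta^{3/2}$, which is not yet $\mathcal{O}(\eta^3)$; to get the sharper bound one observes that odd numbers of martingale factors contribute zero in expectation by the symmetry of Brownian increments (after freezing coefficients at $x$ and controlling the perturbation), and even-numbered pure-martingale products of $s\ge 3$ factors need $s\ge 4$, giving $\eta^{s/2}\ge \eta^2$, while mixed products pick up extra drift factors of $\eta$; systematically, every surviving (nonzero-mean) monomial of total length $s\ge 3$ carries total $\eta$-power at least $3$. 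I expect the main obstacle to be precisely this last step: making the "$\eta$-power counting for mixed drift/diffusion monomials of length $\ge 3$" rigorous while keeping all remainder functions inside $G$ — i.e., tracking that freezing the coefficients at $x$ and discarding odd-martingale terms really does leave a remainder of order $\eta^3$ and not $\eta^{5/2}$. This is handled by one more round of It\^{o} expansion on any integrand that would otherwise produce a half-integer power, pushing the genuine error to the next integer order, exactly as in the classical one-step-error lemmas of \cite{mil1986weak,milstein1995numerical}.
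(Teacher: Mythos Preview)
Your approach via iterated It\^{o}--Taylor expansion is a legitimate alternative, and indeed the paper itself opens its proof by remarking that one could proceed exactly this way before choosing a different route: it applies the semigroup expansion $\E\phi(X_\eta)=\phi(x)+\eta L\phi(x)+\tfrac{1}{2}\eta^2 L^2\phi(x)+\mathcal{O}(\eta^3)$ to the single test function $\phi(y)=e^{t\cdot(y-x)}$, then reads off all moments (i)--(iii) at once by differentiating the resulting expansion of the moment-generating function in $t$. The advantage of the paper's method is that it replaces your case-by-case bookkeeping (drift/martingale splitting, cross terms, power-counting) by one uniform expansion and a handful of derivatives; your approach, by contrast, stays closer to the pathwise picture and makes the role of each term more transparent.

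There is, however, a genuine slip in your treatment of (iii) that causes you to manufacture a difficulty that is not there. You invoke $\E|\Delta|^{2m}\le C(x)\eta^{m}$, which is the standard bound for an SDE with $\mathcal{O}(1)$ diffusion. Here the diffusion is $\eta^{1/2}\sigma$, so both the drift contribution $\int_0^\eta b\,ds=\mathcal{O}(\eta)$ and the martingale contribution $\eta^{1/2}\int_0^\eta\sigma\,dW$ (whose $L^{2m}$-norm is $\mathcal{O}(\eta)$ by BDG) are of order $\eta$, and the correct bound is
\[
\E|\Delta|^{2m}\le C(x)\,\eta^{2m}.
\]
With this, H\"{o}lder immediately gives $\big|\E\prod_{j=1}^s \Delta_{(i_j)}\big|\le \prod_j\big(\E|\Delta_{(i_j)}|^{s}\big)^{1/s}=\mathcal{O}(\eta^{s})\le\mathcal{O}(\eta^{3})$ for every $s\ge 3$, and no symmetry or parity argument is needed at all. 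The same improved moment scaling also disposes of the $\eta^{5/2}$-vs-$\eta^{3}$ worry you flagged in (ii): once you iterate It\^{o} on $\sigma\sigma^T(X_s)$, the deterministic remainder is $\eta\!\int_0^\eta \mathcal{O}(s)\,ds=\mathcal{O}(\eta^3)$, and the stochastic remainder has zero mean. In short, (i)--(ii) are fine as you sketched them, and (iii) becomes a one-line consequence of the small-noise moment bound rather than the ``main obstacle'' you anticipated.
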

\begin{proof}
	One way to establish (i)-(iii) is to employ the Ito-Taylor expansion (see~\cite{Kloeden1992}, Chapter 5) on the random variable $X_\eta$ around $x$ and calculating the moments. Here, we will employ instead the method of semigroup expansions (see~\cite{hille1996functional}, Chapter XI), which works directly on expectation functions. The generator of the stochastic process~\eqref{app:eq:sde} is the operator $L$ acting on sufficiently smooth functions $\phi:\mathbb{R}^d\rightarrow\mathbb{R}$, and is defined by
	\begin{equation}
	L\phi = \sum_{i=1}^{d} b_{(i)}\partial_{(i)}\phi + \frac{1}{2}\eta^2\sum_{i,j=1}^{d}\sigma\sigma^T_{(ij)}
	\partial_{(i)}\partial_{(j)}\phi,
	\end{equation}
	A classical result on semigroup expansions (\cite{hille1996functional}, Chapter XI) states that if $\phi$ and its derivatives up to order 6 belong to $G$, then
	\begin{equation}
	\E\phi(X_\eta) = \phi(x)+L\phi(x)\eta + \frac{1}{2}L^2\phi(x)\eta^2+\mathcal{O}(\eta^3). 
	\label{app:eq:semigroup}
	\end{equation}
	Now, let $t\in\mathbb{R}^d$ and consider the moment-generating function (MGF)
	\begin{equation}
	M(t) = \E e^{t\cdot \Delta}.
	\label{app:eq:mgf}
	\end{equation}
	To ensure its existence we may instead set $t$ to be purely imaginary, i.e. $t=is$ where $s$ is real. Then,~\eqref{app:eq:mgf} is known as the characteristic function (CF). The important property we make use of is that the moments of $\Delta$ are found by differentiating the MGF (or CF) with respect to t. In fact, we have
	\begin{equation}
	\E \prod_{j=1}^{s} \Delta_{(i_j)} = \frac{\partial^s M(t)}{\prod_{j=1}^{s}\partial {t_{(i_j)}}}\biggr\vert_{t=0},
	\label{app:eq:mgf_moments}
	\end{equation}
	where $i_j=1,\dots,d$. We now expand $M(t)$ in powers of $\eta$ using formula~\eqref{app:eq:semigroup}. We get,
	\begin{align}
	M(t) =& 1 + \left[\sum_{i=1}^{d}b_{(i)}t_{(i)}\eta + \frac{1}{2}\sum_{i,j=1}^{d}b_{(i)}t_{(j)}\partial_{(i)}b_{(j)}\eta^2\right] \nonumber\\
	&+ \left[\frac{1}{2}\eta^2(\sum_{i=1}^{d}b_{(i)}t_{(i)})^2 + \frac{1}{2}\sum_{i,j=1}^{d} \sigma\sigma^T_{(ij)}t_{(i)}t_{(j)}\right] + \mathcal{O}(\eta^3). 
	\end{align}
	All functions are again evaluated at $x$. Finally, we apply formula~\eqref{app:eq:mgf_moments} to deduce (i)-(iii). 
\end{proof}

Next, we have an equivalent result for one SGD iteration. 
\begin{lemma}
	\label{app:lem:dbar}
	Let $0<\eta<1$. Consider $x_k$, $k\geq 0$ satisfying the SGD iterations
	\begin{equation}
	x_{k+1} = x_k - \eta \nabla f_{\gamma_k}(x_k)
	\label{app:eq:sgd}
	\end{equation}
	with $x_0 = x\in\mathbb{R}^d$. Define the one-step difference $\bar{\Delta} = x_{1} - x$, then we have
	\begin{enumerate}[(i)]
		\item $\E \bar{\Delta}_{(i)} = -\partial_{(i)} f\eta $
		\item $\E \bar{\Delta}_{(i)}\bar{\Delta}_{(j)} = \partial_{(i)} f\partial_{(j)} f\eta^2 
		+ \Sigma_{(ij)}\eta^2 $.
		\item $\E \prod_{j=1}^s \bar{\Delta}a_{(i_j)} = \mathcal{O}(\eta^3)$ for all $s\geq 3$, $i_j = 1,\dots,d$.
	\end{enumerate}
	where $\Sigma=\frac{1}{n} \sum_{i=1}^n (\nabla f - \nabla f_i)(\nabla f - \nabla f_i)^T.$ All functions above are evaluated at $x$. 
\end{lemma}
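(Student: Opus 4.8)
The plan is to exploit that, in contrast with the SDE setting of Lemma~\ref{app:lem:d}, a single SGD step is \emph{exact}: from~\eqref{app:eq:sgd} with $x_0=x$ one has $\bar\Delta = x_1-x = -\eta\,\nabla f_{\gamma_0}(x)$, so $\bar\Delta_{(i)} = -\eta\,\partial_{(i)} f_{\gamma_0}(x)$ with $\gamma_0$ uniform on $\{1,\dots,n\}$, and there is nothing to Taylor-expand. Every moment of $\bar\Delta$ is thus $(-\eta)^s$ times the corresponding moment of the sampled gradient, so all the work is bookkeeping with the uniform average $\E[\,\cdot\,]=\tfrac1n\sum_{l=1}^n(\,\cdot\,)$, where I reserve $l$ for the sample index to avoid clashing with the coordinate subscripts. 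For (i): $\E\bar\Delta_{(i)} = -\eta\,\tfrac1n\sum_{l=1}^n\partial_{(i)} f_l(x) = -\eta\,\partial_{(i)} f(x)$ by $f=\tfrac1n\sum_l f_l$ and linearity of differentiation; this is exact, with no $\mathcal{O}(\eta^2)$ remainder, consistent with the way the lemma is stated.

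For (ii) the same computation gives $\E\bar\Delta_{(i)}\bar\Delta_{(j)} = \eta^2\,\tfrac1n\sum_{l=1}^n\partial_{(i)} f_l(x)\,\partial_{(j)} f_l(x)$, and the one algebraic step is to re-express this empirical second moment through the covariance: expanding $\Sigma_{(ij)} = \tfrac1n\sum_l(\partial_{(i)} f-\partial_{(i)} f_l)(\partial_{(j)} f-\partial_{(j)} f_l)$ and using $\tfrac1n\sum_l\partial_{(i)} f_l = \partial_{(i)} f$ gives $\tfrac1n\sum_l\partial_{(i)} f_l\,\partial_{(j)} f_l = \Sigma_{(ij)} + \partial_{(i)} f\,\partial_{(j)} f$, which is exactly the claimed identity. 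For (iii), $\E\prod_{j=1}^s\bar\Delta_{(i_j)} = (-\eta)^s\,\tfrac1n\sum_{l=1}^n\prod_{j=1}^s\partial_{(i_j)} f_l(x)$; since each first partial $\partial_{(i_j)} f_l$ lies in $G$ by hypothesis (ii) of Theorem~\ref{thm:sme}, a finite product of them and the finite average over $l$ again lie in $G$, so the quantity is bounded by $K(x)\,\eta^s$ for some $K\in G$, and since $0<\eta<1$ we have $\eta^s\le\eta^3$ whenever $s\ge3$, which is precisely the meaning of $\mathcal{O}(\eta^3)$ fixed by the convention stated before Lemma~\ref{app:lem:d}.

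In short, there is no real obstacle: the lemma is an immediate consequence of the identity $x_1-x=-\eta\nabla f_{\gamma_0}(x)$ together with the definitions of $f$ and $\Sigma$. The only points that genuinely require care are the notational index clash (handled by keeping $l$ for the sample sum) and making the bound in (iii) precise in the $G$-controlled sense. Once this lemma is combined with Lemma~\ref{app:lem:d}, the SDE coefficients of~\eqref{eq:sme} are chosen so that the one-step moments of the SGD and of~\eqref{eq:sme} agree up to $\mathcal{O}(\eta^{\alpha+1})$ (the $\tfrac14(\alpha-1)\eta$ drift correction being exactly what cancels the unwanted $\eta^2$ contribution in Lemma~\ref{app:lem:d}(i) when $\alpha=2$), and the general weak-error comparison result of~\cite{mil1986weak} then upgrades this local estimate to the global $\mathcal{O}(\eta^\alpha)$ bound of Theorem~\ref{thm:sme}.
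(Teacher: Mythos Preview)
Your proof is correct and takes essentially the same approach as the paper: the paper's own proof reads, in its entirety, ``From definition~\eqref{app:eq:sgd} and the definition of $\Sigma$, the results are immediate.'' You have simply (and accurately) spelled out the bookkeeping behind that one-line assertion, including the correct handling of the $\mathcal{O}(\eta^3)$ bound in (iii) via the $G$-growth convention.
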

\begin{proof}
	From definition~\eqref{app:eq:sgd} and the definition of $\Sigma$, the results are immediate. 
\end{proof}

Now, we will need a key result linking one step approximations to global approximations due to Milstein. We reproduce the theorem, tailored to our problem, below. The more general statement can be found in~\cite{mil1986weak}.
\begin{theorem}[Milstein, 1986]
	\label{app:thm:milstein}
	Let $\alpha$ be a positive integer and let the assumptions in Theorem~\ref{thm:sme} hold. If in addition there exists $K_1,K_2\in G$ so that
	\begin{equation}
	\vert \E \prod_{j=1}^{s} \Delta_{(i_j)} - \E \prod_{j=1}^s \bar{\Delta}_{(i_j)} \vert \leq K_1(x)\eta^{\alpha+1},\nonumber
	\end{equation}
	for $s=1,2,\dots,2\alpha+1$ and 
	\begin{equation}
	\E \prod_{j=1}^{2\alpha+2} \vert \bar{\Delta}_{(i_j)} \vert \leq K_2(x)\eta^{\alpha+1}.\nonumber
	\end{equation}	
	Then, there exists a constant $C$ so that for all $k=0,1,\dots,N$ we have
	\begin{equation}
	\vert \E g(X_{k\eta}) - \E g(x_k)  \vert \leq C\eta^\alpha \nonumber
	\end{equation}
\end{theorem}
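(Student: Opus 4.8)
The plan is to follow Milstein's classical telescoping argument, which converts the given local (one-step) moment-matching estimates into a global weak error bound. The central object is the function associated with the continuous process,
\[
u(x,s) := \E[g(X_s^x)],
\]
where $X^x$ denotes the solution of~\eqref{eq:sme} started at $X_0 = x$. Under the smoothness and growth hypotheses inherited from Theorem~\ref{thm:sme}, $u(\cdot,s)$ solves the backward Kolmogorov equation $\partial_s u = Lu$ with $u(\cdot,0)=g$, and --- this is the regularity input I will need --- $u$ and its spatial derivatives up to order $2\alpha+2$ exist, belong to $G$, and are bounded uniformly for $s\in[0,T]$. Note that $2\alpha+2=6$ in the worst case $\alpha=2$, matching the order-$6$ derivative hypothesis on $g$ in condition~(iv) and the order-$7$ derivative hypotheses on $f,f_i$. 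I would establish this regularity separately from standard results on differentiability of SDE flows in the initial datum, together with the polynomial-growth bounds on $g$.

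Given $u$, I would decompose the global error by a \emph{hybrid trajectory} argument: run the discrete chain for $k$ steps and then the continuous flow for the remaining time $(N-k)\eta$. Since $u(x_0,N\eta)=\E g(X_{N\eta})$ and $u(x_N,0)=g(x_N)$, telescoping gives
\[
\E g(x_N) - \E g(X_{N\eta}) = \sum_{k=0}^{N-1}\E\bigl[u(x_{k+1},(N{-}k{-}1)\eta) - u(x_k,(N{-}k)\eta)\bigr].
\]
By the semigroup (Markov) property, $u(x_k,(N{-}k)\eta)=\E[u(X_\eta^{x_k},(N{-}k{-}1)\eta)\mid x_k]$, so each summand is exactly the one-step weak error, for the frozen test function $\phi=u(\cdot,s)$ with $s=(N{-}k{-}1)\eta$, between the continuous increment $\Delta=X_\eta^{x_k}-x_k$ and the discrete increment $\bar\Delta=x_{k+1}-x_k$.

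To bound a single one-step error I would Taylor-expand $\phi(x+\Delta)$ and $\phi(x+\bar\Delta)$ about $x=x_k$ to order $2\alpha+1$. The terms of order $s=1,\dots,2\alpha+1$ are linear combinations of $\partial_{(i_1)}\cdots\partial_{(i_s)}\phi(x)$ weighted by $\E\prod_j\Delta_{(i_j)}$ and $\E\prod_j\bar\Delta_{(i_j)}$; their difference is controlled by the first hypothesis, $|\E\prod\Delta_{(i_j)}-\E\prod\bar\Delta_{(i_j)}|\le K_1(x)\eta^{\alpha+1}$, multiplied by the (uniformly $G$-bounded) derivatives of $\phi$. The Taylor remainders involve $2\alpha+2$-fold products of increments against $2\alpha+2$-th derivatives of $\phi$; these are handled by the second hypothesis, $\E\prod_{j=1}^{2\alpha+2}|\bar\Delta_{(i_j)}|\le K_2(x)\eta^{\alpha+1}$, together with the analogous (and easier, via Lemma~\ref{app:lem:d}) bound for $\Delta$. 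Hence each summand is bounded by $\E[\tilde K(x_k)]\,\eta^{\alpha+1}$ for some $\tilde K\in G$.

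Finally I would sum the $N=\lfloor T/\eta\rfloor$ terms: since $N\eta^{\alpha+1}\le T\eta^\alpha$, the global error is $\le C\eta^\alpha$, provided $\sup_{0\le k\le N}\E[\tilde K(x_k)]$ is finite independently of $\eta$ and $k$. The main obstacle is precisely securing these two uniformities: (a) the uniform-in-$s$ control of the high-order derivatives of $u$ in $G$, and (b) uniform-in-$k$ polynomial moment bounds $\E[\tilde K(x_k)]\le C$ for the iterates. Both rest on the linear-growth condition~(iii) and the derivative hypotheses of Theorem~\ref{thm:sme}; (a) is the more delicate, as it requires propagating the smoothness and growth of $g$ through the Kolmogorov semigroup uniformly on $[0,T]$. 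The remaining Taylor bookkeeping and the verification of the one-step increment moment estimates are routine given Lemmas~\ref{app:lem:d} and~\ref{app:lem:dbar}.
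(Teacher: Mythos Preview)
The paper does not prove this theorem at all: its entire ``proof'' is the single line ``See~\cite{mil1986weak}, Theorem~2 and Lemma~5.'' Your proposal is a correct and detailed outline of exactly the classical Milstein argument that lives behind that citation --- the hybrid-trajectory telescoping via $u(x,s)=\E[g(X^x_s)]$, Taylor expansion to order $2\alpha+1$, matching of the mixed moments up to $s=2\alpha+1$, remainder control via the $(2\alpha+2)$-th absolute moment, and summation of $N\sim T/\eta$ one-step errors of size $\eta^{\alpha+1}$. You have also correctly isolated the two places where the real work sits (uniform-in-$s$ polynomial-growth bounds on the spatial derivatives of $u$, and uniform-in-$k$ polynomial moment bounds on $x_k$), both of which Milstein handles under precisely the linear-growth and smoothness hypotheses that Theorem~\ref{thm:sme} imposes. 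In short, you have supplied what the paper only cites; there is no discrepancy in approach, and no gap in your sketch beyond the regularity verifications you already flagged.
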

\begin{proof}
	See~\cite{mil1986weak}, Theorem 2 and Lemma 5. 
\end{proof}

\subsection*{Proof of Theorem~\ref{thm:sme}}
We are now ready to prove theorem~\ref{thm:sme} by checking the conditions in theorem~\ref{app:thm:milstein} with $\alpha=1,2$. The second condition is implied by Lemma~\ref{app:lem:dbar}. The first condition is implied by Lemma~\ref{app:lem:d} and Lemma~\ref{app:lem:dbar} with the choice
\begin{align}
b(x) &= -\nabla (f(x) + \frac{1}{4}\eta (\alpha-1) \vert \nabla f(x)) \vert^2, \nonumber \\
\sigma(x) &= \Sigma(x)^\frac{1}{2}.\nonumber 
\end{align}\qed

To illustrate our approximation result, let us calculate, using Monte-Carlo simulations, the weak error of the SME approximation
\begin{equation}
E_w = \vert \E g(X_{N\eta})-\E g(x_{N}) \vert,
\label{eq:weak_err}
\end{equation}
for $\alpha=1,2$ v.s. $\eta$ for different $f,f_i$ and generic polynomial test functions $g$. The results are shown in Fig.~\ref{fig:error_smg_sgd}. We see that we have order $\alpha$ weak convergence, even when some conditions of the above theorem are not satisfied (Fig.~\ref{fig:error_smg_sgd}(b)). 
\begin{figure}[H]
	\centering
	\subfloat[]{\includegraphics[width=6cm]{./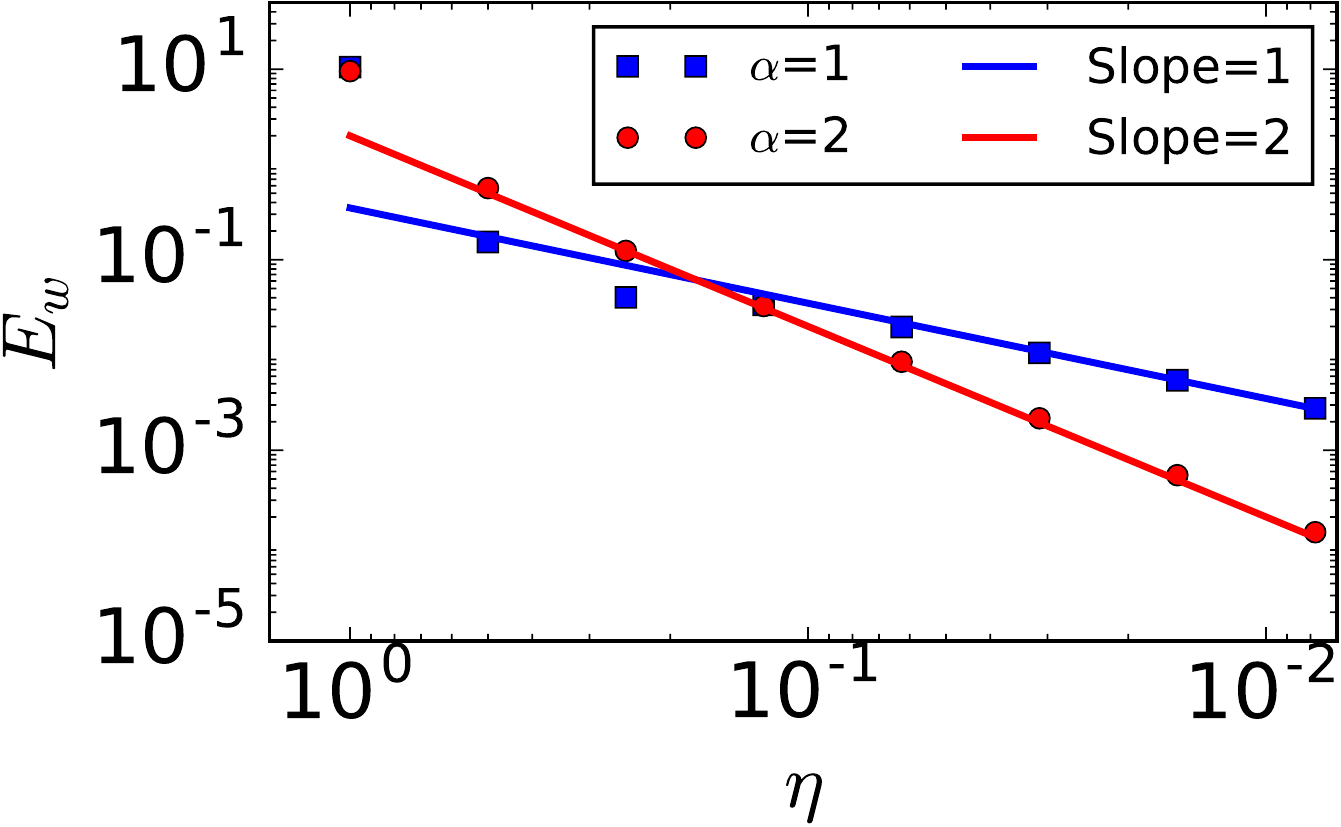}}
	\subfloat[]{\includegraphics[width=6cm]{./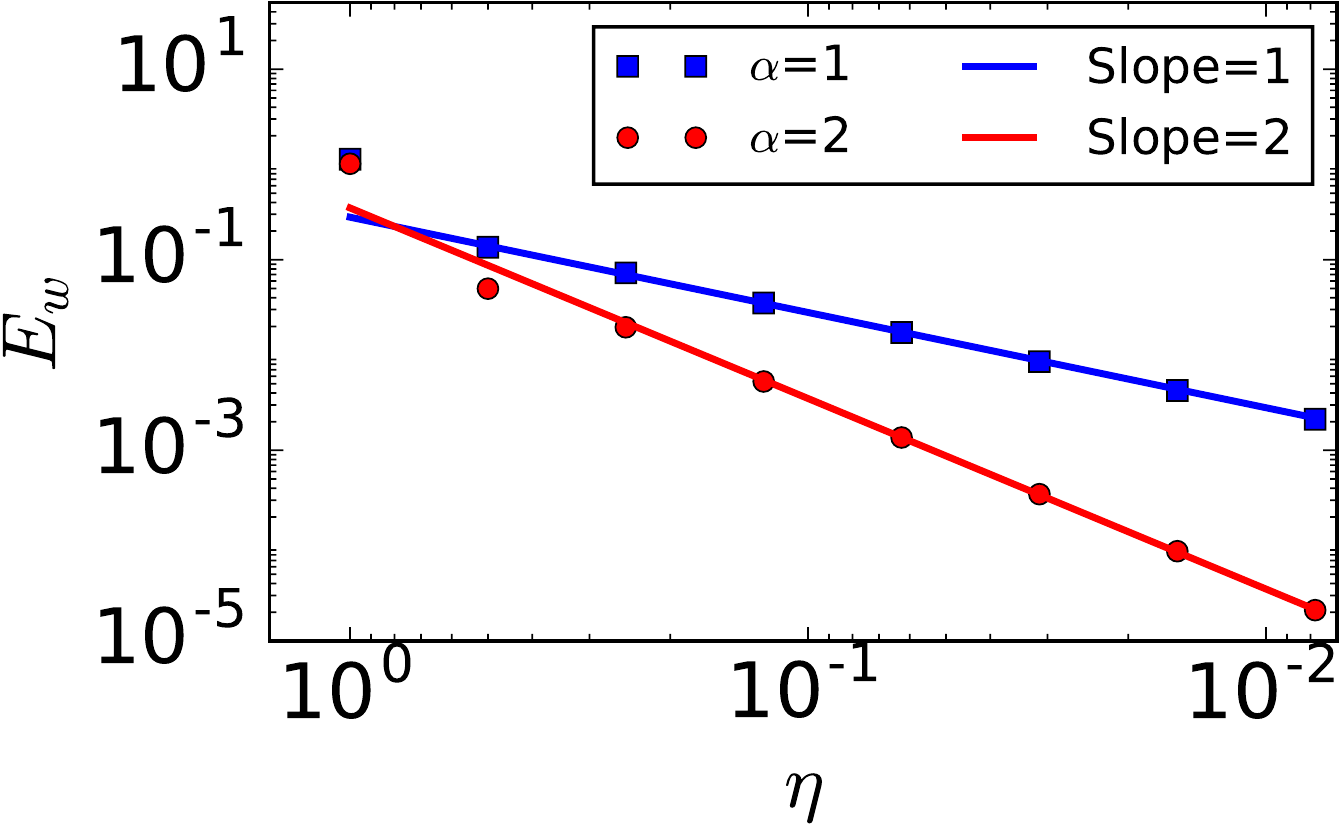}}	
	\caption{Weak error $E_w$, as defined in~\eqref{eq:weak_err} with $\alpha=1,2$, v.s. learning rate $\eta$ for two different choices of $f,f_i$. All errors are averaged over $10^{12}$ samples of SGD trajectories up to $T=1.0$. The initial condition is $x_0=1$. The SMEs moments are solved exactly since they involve linear drifts. (a) Quadratic objective with $n=2$, $f_i=(x-\gamma_i)^2$ where $\gamma_i\in\{\pm 1\}$. The total objective is $f(x)=x^2+1$. The test function is $g(x)=x+x^2+x^3$. (b) Non-convex $f_i$'s with $n=2$, $f_i(x)=(x-\gamma_i)^2 + \gamma_i x^3$ where $\gamma_i\in\{\pm 1\}$. The total objective is the same $f(x)=x^2+1$. We chose $g(x)=x$ so that $\E g(X_T)$ has closed form solution. Note that for this choice of $f_i$, the condition (iii) of Theorem~\ref{thm:sme} is not satisfied. Nevertheless, in both cases, we observe that the weak error decreases with $\eta$ like $E_w\sim\eta^\alpha$. }
	\label{fig:error_smg_sgd}
\end{figure}

\begin{remark}
	From above, we also observe that if we pick $b(x)=-\nabla f(x)$ and $\sigma(x)$ to be {\it any} function in $G$ (and its sufficiently high derivatives are also in $G$), then we have matching moments up to order $\eta^2$ and hence we can conclude that for this choice, the resulting SDE is a first order weak approximation of the SGD,
	with $\vert \E g(X_{k\eta}) - \E g(x_k)  \vert \leq C\eta \nonumber$, $k=0,1,\dots,N$. In particular, the deterministic gradient flow is a first order weak approximation of the SGD. Hence, just like traditional modified equations, our SME~\eqref{eq:sme} ($\alpha=2$) is the next order approximation of the underlying algorithm.
	
	However, we stress that for our first order SME with $\alpha=1$, i.e. the choice $b=-\nabla f$ and $\sigma=\Sigma^\frac{1}{2}$, the fact that we did not the improve the order of weak convergence from the deterministic gradient flow does not mean that this is a equally bad approximation. The constant $C$ in the weak error depends on the choice of $\Sigma$ and in fact, it can be shown empirically that with this choice, we do have lower weak error $\vert \E g(X_{k\eta}) - \E g(x_k)  \vert $, but the order of convergence of the weak error as $\eta\rightarrow 0$ is the same. An analytical justification must then rely on using  the It\^{o}-Taylor expansion to obtain precise estimates for the factor $C$ (see e.g.~\cite{talay1990expansion}). This is beyond the scope of the current paper.
\end{remark}

\begin{remark}
	The Lipschitz condition (i) is to ensure that the SME has a unique strong solution with uniformly bounded moments~\cite{mil1986weak}. If we allow weak solutions and establish uniform boundedness of moments by other means (more assumptions on the growth and direction of $\nabla f$ for large $x$), then condition (i) is expected to be relaxed although the technical details will be tedious. 
	
	Condition (iii) in Theorem~\ref{thm:sme} appears to be the most stringent one and in fact it may limit applications to problems with objectives that have more than quadratic growth. However, closer inspection tells us it can also be relaxed. For example, if there exists an invariant distribution that concentrates on a compact subset of $\R^d$ then as $\eta\rightarrow 0$, $x_k$'s would be bounded with high probability, and hence for large $x$ we may replace $f,f_i$ with a version that satisfies the growth condition in (iii). Further work is needed to make this precise but we can already see in Fig.~\ref{fig:error_smg_sgd}(b) that we have quadratic weak convergence even when (iii) is not satisfied.  
\end{remark}
\begin{remark}
	The regularity conditions on $f$ and $g$ in Theorem~\ref{thm:sme} are inherited from Theorem~\ref{app:thm:milstein} in~\cite{mil1986weak}. For smooth objectives, polynomial growth conditions are usually not restrictive. Still, with care, these should be relaxed since in our case the small noise helps to reduce the number of terms containing higher derivatives in various Taylor and It\^{o}-Taylor expansions. Proving a more general version of Theorem~\ref{thm:sme} will be left as future work. 
\end{remark}

\section{Derivation of SMEs} 
\label{app:sec:SME_derivation}
In this section, we include more detailed derivations of the SMEs used in the main paper. For brevity, we do not include rigorous proofs of approximation statements for SGD variants in Sec.~\ref{app:sec:SME_derivation_lr} and~\ref{app:sec:SME_derivation_mom}, but only heuristic justifications. Proving rigorous statements for these approximations can be done by modifying the proof of Thm. 1. 

\subsection{SME for the simple quadratic example}
We start with the example in Sec. 3.1 of the main paper. Let $n=2$, $d=1$ and set $f(x)=x^2+1$ with $f_1(x)=(x-1)^2$ and $f_2(x)=(x+1)^2$. The SGD iterations picks at random between $f_1$ and $f_2$ and performs descent with their respective gradients. Recall that the (second order) SME is given by
\begin{equation}
dX_t = - (f^\prime(X_t)+\frac{1}{2}\eta f^\prime(X_t)f^{\prime\prime}(X_t)) dt + (\eta \Sigma(X_t))^\frac{1}{2} dW_t.
\end{equation}
Now, $f^\prime(x) = 2x$, $f^{\prime\prime}(x)=2$ and 
\begin{equation}
\Sigma(x) = \frac{1}{2}\sum_{i=1}^{2} (f_i^\prime(x)-f_i(x))^{2} = 4
\end{equation}
and hence the SME is 
\begin{equation}
dX_t = - 2(1+\eta)X_t dt + 2\sqrt{\eta} dW_t.
\end{equation}

\subsection{SME for learning rate adjustment}
\label{app:sec:SME_derivation_lr}
The SGD iterations with learning rate adjustment is 
\begin{equation}
x_{k+1} = x_{k} - \eta u_k f'(x_k),
\end{equation}
where $u_k\in[0,1]$ is the learning rate adjustment factor. $\eta$ is the maximum allowed learning rate. There are two reasons we introduce this hyper-parameter. First, gradients cannot be arbitrarily large since that will cause instabilities. Second, the SME is only an approximation of the SGD for small learning rates, and so it is hard to justify the approximation for large $\eta$. 

In this case, deriving the corresponding SME is extremely simple. Notice that we can define $g_{i,k}(x_k)=u_k f_{i}(x_k)$, $g_k = u_k f(x_k)$. Then, the iterations above is simply
\begin{equation}
x_{k+1} = x_{k} - \eta g'_{\gamma_k,k}(x_k),
\end{equation}
whose (first order) SME is by Thm. 1
\begin{equation}
dX_t = - g'(X_t) dt + (\eta u^2_t \Sigma(X_t))^\frac{1}{2}dW_t.
\end{equation}
And hence the SME for SGD with learning rate adjustments is
\begin{equation}
dX_t = - u_t f'(X_t) dt + u_t (\eta \Sigma(X_t))^\frac{1}{2}dW_t.
\end{equation}

\subsection{SME for SGD with momentum}
\label{app:sec:SME_derivation_mom}
First let us consider the constant momentum parameter case. The SGD with momentum is the paired update
\begin{align}
v_{k+1} &= \mu v_k - \eta f'_{\gamma_k}(x_k), \nonumber\\
x_{k+1} &= x_k + v_{k+1}. 
\end{align}
To derive and SME, notice that we can write the above as
\begin{align}
v_{k+1} &= v_k + \eta \left( -\frac{1-\mu}{\eta} v_k - f'(x_k) \right) + \eta \left(f'(x_k)-f'_{\gamma_k}(x_k)\right), \nonumber\\
x_{k+1} &= x_k + \eta \left(\frac{v_{k+1}}{\eta}\right). 
\end{align}
Recall that since we are looking at first order weak approximations, it is sufficient to compare to the Euler-Maruyama discretization (Sec.~\ref{app:sec:numerical}). We observe that the above can be seen as an Euler-Maruyama discretization of the coupled SDE
\begin{align}
dV_t &= ( -\frac{1-\mu}{\eta} V_t - f'(X_t) )dt + (\eta \Sigma(X_t))^{\frac{1}{2}} dW_t, \nonumber\\
dX_t &= \frac{1}{\eta}V_t dt, 
\end{align}
with the usual choice of $\Sigma(x)$. Hence, this is the first order SME for the SGD with momentum having a constant momentum parameter $\mu$. For time-varying momentum parameter, we just replace $\mu$ by $\mu_t$ to get
\begin{align}
dV_t &= ( -\frac{1-\mu_t}{\eta} V_t - f'(X_t) )dt + (\eta \Sigma(X_t))^{\frac{1}{2}} dW_t, \nonumber\\
dX_t &= \frac{1}{\eta}V_t dt. 
\end{align}

\section{Solution of optimal control problems}
\label{app:sec:opt_control_soln}
\subsection{Brief introduction to optimal control}
We first introduce some basic terminologies and results on optimal control theory to pave way for our solutions to optimal control problems for the learning rate and momentum parameter. For simplicity, we restrict to one state dimension ($d=1$), but similar equations hold for multiple dimensions. For a more thorough introduction to optimal control theory and calculus of variations, we refer the reader to~\cite{liberzon2012calculus}. 

Let $t\in[0,T]$ and consider the ODE
\begin{equation}
\frac{d}{dt} z_t = \Phi(z_t,u_t),
\label{app:eq:control_ode}
\end{equation}
where $z_t,u_t\in\mathbb{R}$ and $\Phi:\mathbb{R}\times\mathbb{R}\rightarrow \mathbb{R}$. The variable $z_t$ describes the evolution of some state and $u_t$ is the control variable, which can affect the state dynamics. Consider the control problem of minimizing the cost functional
\begin{equation}
C(u) = \int_{0}^{T} L(z_s,u_s) ds + G(z_T)
\end{equation}
with respect to $u_t$, subject to $z_t$ satisfying the ODE~\eqref{app:eq:control_ode} with prescribed initial condition $z_0\in\mathbb{R}$. The function $L$ is known as the running cost and $G$ is the terminal cost. Usually, we also specify some control set $U\subset \mathbb{R}$ so that we only consider $u:[0,T]\rightarrow U$. The full control problem reads
\begin{equation}
\min_{u:[0,T]\rightarrow U} C(u) \text{ subject to } \eqref{app:eq:control_ode}. 
\label{app:eq:control_problem}
\end{equation}
Note that additional path constraints can also be added and~\eqref{app:eq:control_ode} can also be made time-inhomogeneous, but for our purposes it is sufficient to consider the above form. 

There are two principal ways of solving optimal control problems: either dynamic programming through the Hamilton-Jacobi-Bellman (HJB) equation~\citep{bellman1956dynamic}, or using the Pontryagin's maximum principle (PMP)~\citep{pontryagin1987mathematical}. In this section, we will only discuss the HJB method as this is the one we employ to solve the relevant control problems in this paper. 

\subsection{Dynamic programming and the HJB equation}
\label{app:sec:HJB}
The first way to solve~\eqref{app:eq:control_problem} is through the dynamic programming principle. For $t\in[0,T]$ and $z\in\mathbb{R}$, define the value function
\begin{align}
V(z,t) &= \min_{u:[t,T]\rightarrow U} \int_{t}^{T} L(z_s,u_s) ds + G(z_T),\nonumber\\
&\text{subject to} \nonumber\\
\frac{d}{ds} z_s &= \Phi(z_s,u_s), \quad s\in[t,T],\nonumber\\
z(t) &=z. 
\end{align}
Notice that if there exists a solution to~\eqref{app:eq:control_problem}, then the value of the minimum cost is $V(z_0,0)$. The dynamic programming principle allows us to derive a recursion on the function $V$, in the form of a partial differential equation (PDE)
\begin{align}
\partial_t V(z,t) + \min_{u\in U} \{\partial_z V(z,t) \Phi(z,u)+L(z,u)\} &= 0, \nonumber\\
V(T,z) &= G(z).
\label{app:eq:HJB}
\end{align}
This is known as the {\it Hamilton-Jacobi-Bellman equation} (HJB). Note that this PDE is solved backwards in time. The derivation of this PDE can be found in most references on optimal control, e.g. in~\cite{liberzon2012calculus}. The main idea is the dynamic programming principle: for any $t$ the $[t,T]$-portion of the optimal trajectory must again be optimal. 

After solving the HJB~\eqref{app:eq:HJB}, we can then obtain the optimal control $u^*_t$ as function of the state process $z_t$ and $t$, given by 
\begin{equation}
u^*_t = \underset{u\in U}{\text{arg min}}\{\partial_z V(z_t,t) \Phi(z_t,u)+L(z_t,u)\}.
\label{app:eq:HJB_opt_control}
\end{equation}
In some cases, we find that the optimal control is independent of time and is strictly of a {\it feed-back control law}, i.e. 
\begin{equation}
u^*_t = u^*(z_t)
\end{equation}
for some function $u^*:\mathbb{R}\rightarrow U$. This is the case for the problems considered in this paper. With the optimal control found in~\eqref{app:eq:HJB_opt_control}, we can then substitute $u_t=u^*_t$ in~\eqref{app:eq:control_ode} to obtain the optimally controlled process $z^*_t$. 

In summary, to solve the optimal control problem~\eqref{app:eq:control_problem}, we first solve the HJB PDE~\eqref{app:eq:HJB}, and then solve for the optimal control~\eqref{app:eq:HJB_opt_control}, and lastly (if necessary) solve the optimally controlled state process by substituting the solution of~\eqref{app:eq:HJB_opt_control} into~\eqref{app:eq:control_ode}. Sometimes, the optimal control~\eqref{app:eq:HJB_opt_control} can be solved without fully solving the HJB for $V$, e.g. when $L=0$ and one can infer the sign of $\partial V$. This is the case for the two control problems we encounter in this paper. The solution to~\eqref{app:eq:HJB_opt_control} is the most important for all practical purposes since it gives a way to adjust the control parameters on-the-fly, especially when we have a feed back control law.  

\subsection{Solution of the learning rate control problem}
\label{app:sec:soln_lr_problem}
Now, let us apply the HJB equations (Sec.~\ref{app:sec:HJB}) to solve the learning rate control problem. Recall from Sec. 4.1.2 that we wish to solve 
\begin{align}
\min_{u:[0,T]\rightarrow[0,1]} & m_T \text{ subject to } \nonumber\\
\frac{d}{dt}m_t &= -2a u_t m_t + \frac{1}{2} a\eta \Sigma u_t^2,\nonumber\\
m_0 &= \frac{1}{2}a(x_0-b)^2.
\label{app:eq:control_lr_ode}
\end{align}
This is of the form~\eqref{app:eq:control_problem} with $\Phi(m,u)=-2aum+a\eta \Sigma u^2 /2$, $L(m,u)=0$ and $G(m)=m$. Thus, we write the HJB equation
\begin{align}
\partial_t V(m,t) + \min_{u\in[0,1]} \{\partial_m V(m,t) [-2aum + \frac{1}{2}a\eta\Sigma u^2 ] \} &= 0 ,\nonumber\\
V(m,T) &= m.
\end{align}
First, it's not hard to see that for $a>0$, $\partial_m V\geq 0$ for all $m,t$. This is because, the lower the $m$, the closer we are to the optimum and hence the minimum cost achievable in the same time interval $[t,T]$ should be less. Similarly,$\partial_m V\geq 0$ holds for $a<0$ if one reverses all previous statements (in this case $m$ is negative). Hence, we can calculate the minimum
\begin{align}
u^* &= \underset{u\in[0,1]}{\text{arg min}} \{-2aum+\frac{1}{2}a\eta\Sigma u^2\},\nonumber\\
&= \begin{cases} 
1 \qquad &a\leq0,\\ 
\min(1,\frac{2 m}{\eta\Sigma})\qquad &a>0. 
\end{cases}
\end{align}
Notice that this solution is a feed-back control policy. We can now substitute $u_t=u^*_t$ where
\begin{align}
u^*_t &= \begin{cases} 
1 \qquad &a\leq0,\\ 
\min(1,\frac{2 m_t}{\eta\Sigma})\qquad &a>0. 
\end{cases}
\label{app:eq:control_lr_u}
\end{align}
into the ODE in ~\eqref{app:eq:control_lr_ode} to obtain
\begin{align}
m^*_t &= \begin{cases} 
m_0 e^{-2 a t} + \frac{1}{4} \eta  \Sigma (1-e^{-2 a t})  \qquad &a\leq 0 \text{ or } t<t^*,\\ 
\frac{\eta\Sigma}{2 + 2 a (t-t^*)} \qquad &a>0 \text{ and } t\geq t^*. 
\end{cases}
\end{align}
where 
\begin{equation}
t^*=\frac{1}{2a}\log\left(\frac{4m_0}{\eta\Sigma}-1\right)
\end{equation}
And therefore, we get from~\eqref{app:eq:control_lr_u} the effective annealing schedule 
\begin{equation}
u^*_t = \begin{cases} 
1 \qquad &a\leq 0\text{ or }t\geq t^*,\\ 
\frac{1}{1+a(t-t^*)}\qquad &a>0\text{ and } t>t^*,
\end{cases}
\label{app:eq:control_annealing_u}
\end{equation}

\subsection{Solution of the momentum parameter control problem}
\label{app:sec:soln_mom_problem}
We shall consider the case $a>0$, since for $a\leq 0$ the optimal control is trivially $\mu_t = 1$. The momentum parameter control problem is 
\begin{align}
\min_{\mu:[0,T]\rightarrow[0,1]} & m_T \text{ subject to } \nonumber\\
\frac{d}{dt}m_t &= \mathcal{R} \lambda(\mu_t) (m_t - m_\infty(\mu_t)),\nonumber\\
m_0 &= \frac{1}{2}a(x_0-b)^2,
\label{app:eq:control_mom_ode}
\end{align}
where
\begin{equation}
\lambda(\mu) = - \frac{(1-\mu)-\sqrt{(1-\mu)^2-4a\eta}}{\eta}, \qquad m_{\infty}(\mu) = \frac{\eta\Sigma}{4(1-\mu)}.
\end{equation}
This is of the form~\eqref{app:eq:control_problem} with $\Phi(m,\mu)=\mathcal{R}\lambda(\mu)(m-m_\infty(\mu))$, $L(m,u)=0$ and $G(m)=m$. The HJB equation is
\begin{align}
\partial_t V(m,t) + \min_{\mu\in[0,1]} \{\partial_m V(m,t) [\mathcal{R}\lambda(\mu)(m-m_\infty(\mu))] \} &= 0 ,\nonumber\\
V(m,T) &= m.
\end{align}
Again, it is easy to see that $\partial_m V(m,t)\geq 0$ for all $m,t$ and so 
\begin{equation}
\mu^* = \underset{\mu\in[0,1]}{\text{arg min}} \{\mathcal{R}\lambda(\mu)(m-m_\infty(\mu)) \}
\label{app:eq:control_mom_minprob}
\end{equation}
This minimization problem has no closed form solution. However, observe that $\mathcal{R}\lambda(\mu)\leq 0$ and is minimized at $\mu = \mu_{\text{opt}} = \max(0,1-2\sqrt{a\eta})$. Now, if $\mu>\mu_\text{opt}$, we have $\mathcal{R}\lambda(\mu)=-(1-\mu)/\eta$ and so $\mathcal{R}\lambda(\mu)(m-m_\infty(\mu))$ is increasing in $\mu$ for $\mu>\mu_\text{opt}$ (one can check this by differentiation and showing that the derivative is always positive). Hence, $\mu^*\leq \mu_\text{opt}$ and it is sufficient to consider $\mu\in[0,\mu_\text{opt}]$ in the minimization problem~\eqref{app:eq:control_mom_minprob}. 

Next, observe that $m-m_\infty(\mu)$ is decreasing in $\mu$ and negative if 
\begin{equation}
m < \frac{\eta\Sigma}{4(1-\mu)}.
\end{equation}
or
\begin{equation}
\mu > 1-\frac{\eta\Sigma}{4m}.
\end{equation}
At the same time, $\mathcal{R}\lambda(\mu)$ is negative and decreasing for $\mu\in[0,\mu_\text{opt}]$. Thus, the product $\mathcal{R}\lambda(\mu)(m-m_\infty(\mu))$ is positive and increasing for $1-\frac{\eta\Sigma}{4m}<\mu<\mu_\text{opt}$ and hence we must have 
\begin{equation}
\mu^* \leq 1-\frac{\eta\Sigma}{4m}. 
\end{equation}
Note that this is only a bound, but for small $\eta$, we can take this as an approximation of $\mu^*$, so long as it is less than $\mu_\text{opt}$. Hence, we arrive at
\begin{equation}
\mu^*_t = \begin{cases} 
1 \qquad &a\leq0,\\ 
\min(\mu_{\text{opt}},\max(0,1-\frac{\eta\Sigma}{4m_t}))\qquad &a>0.
\end{cases}
\label{app:eq:control_mom_mu}
\end{equation}
One can of course follow the steps in Sec.~\ref{app:sec:soln_lr_problem} to calculate $m^*_t$ and hence $\mu^*_t$ in the form of an annealing schedule. We omit these calculations since they are not relevant to applications.

\section{Numerical experiments}
\label{app:sec:implementation}

In this section, we provide model and algorithmic details for the various numerical experiments considered in the main paper, as well as a brief description of the commonly applied adaptive learning rate methods that we compare the cSGD algorithm with.

\subsection{Model details}
\label{app:sec:model_details}
In Sec. 4 from the main paper, we consider three separate models for two datasets. 
\subsubsection*{M0: fully connected NN on MNIST}
The first dataset we consider the MNIST dataset~\citep{lecun1998mnist}, which involves computer recognition of 60000 $28\times 28$ pixel pictures of handwritten digits. We split it into 55000 training samples and 5000 test samples. Our inference model is a fully connected neural network with one hidden layer. For a input batch $K$ of pixel data (flattened into a vector) $z\in\mathbb{R}^{784\times K}$, we define the model
\begin{equation}
y = \text{softmax}(W_2 h_R(W_1 z + b_1) + b2),
\end{equation}
where the activation function $h_R$ is the commonly used Rectified Linear Unit (ReLU) 
\begin{equation}
h_R(z)_{(ij)} = \max(z_{(ij)},0).
\end{equation}
The first layer weights and biases are $W_1\in\mathbb{R}^{784\times10}$ and $b_1\in\mathbb{R}^{10}$ and the second layer weights and biases are $W_2\in\mathbb{R}^{10\times10}$ and $b_2\in\mathbb{R}^{10}$. These constitute the trainable parameters. 
The $\text{softmax}$ function is defined as
\begin{equation}
\text{softmax}(z)_{(ij)} = \frac{\exp(-z_{(ij)})}{\sum_k \exp(-z_{(kj)})}.
\end{equation}
The output tensors $y\in\mathbb{R}^{10\times K}$ is compared to a batch of one-hot target labels $\hat{y}$ with the cross-entropy loss
\begin{equation}
C(y,\hat{y}) = -\frac{1}{10K}\sum_{i,j} \hat{y}_{(ij)}\log y_{(ij)}.
\end{equation}
Lastly, we use $\ell_2$ regularization so that the minimization problem is
\begin{equation}
\min_{W_1,b_1,W_2,b_2} C(y,\hat{y}) + \sum_{i=1}^{2}\lambda_{W,i}\Vert W_i\Vert_2^2 + \sum_{i=1}^{2}\lambda_{b,i}\Vert b_i\Vert_2^2,
\end{equation}
Each regularization strength $\lambda$ is set to be 1 divided by the dimension of the trainable parameter. 
\subsubsection*{C0: fully connected NN on CIFAR-10}
The CIFAR-10 dataset~\citep{krizhevsky2009learning} consists of 60000 small $32\times 32$ pixels of RGB natural images belonging to ten separate classes. We split the dataset into 50000 training samples and 10000 test samples. 
Our first model for this dataset is a deeper fully connected neural network
\begin{equation}
y = \text{softmax}(W_3h_T(W_2h_T(W_1z + b_1)+b_2) + b_3),
\end{equation}
where we use a tanh activation function between the hidden layers 
\begin{equation}
h_T(z)_{(ij)} = \text{tanh}(z_{(ij)}).
\end{equation}
The layers have width 3071,500,300,10. That is, the trainable parameters have dimensions $W_1\in\mathbb{R}^{3071\times 500}, b_1\in\mathbb{R}^{500}$, $W_2\in\mathbb{R}^{500\times 300}, b_2\in\mathbb{R}^{300}$, $W_3\in\mathbb{R}^{300\times 10}, b_3\in\mathbb{R}^{10}$. We use the same soft-max output, cross-entropy loss and $\ell_2$ regularization as as before. 
\subsubsection*{C1: convolutional NN on CIFAR-10}
Our last experiment is a convolutional neural network on the same CIFAR-10 dataset. We use four convolution layers consisting of convolution,batch-normalization,ReLU,max-pooling. Convolution filter size is $5\times 5$, with uniform stride 1 and padding 2. Output channels of convolution layers are \{96,128,256,64\}. The pooling size is $2\times 2$ with stride $2$. The output layers consist of two fully connected layers of width \{1024,256\} and drop-out rate $0.5$. $\ell_2$ regularization is introduced as a weight decay with decay parameter 5e-3. 

\subsection{Adagrad and Adam}
\label{app:sec:ada_methods}
Here, we write down for completeness the iteration rules of Adagrad~\citep{duchi2011adaptive}, and Adam~\citep{kingma2015adam} optimizers, which are commonly applied tools to tune the learning rate. For more details and background, the reader should consult the respective references. 

\textbf{Adagrad.} The Adagrad modification to the SGD reads 
\begin{equation}
x_{(i),k+1} = x_{k,(i)} - \frac{\eta}{\sqrt{G_{k,(i)}}}\partial_{(i)}f_{\gamma_k}(x_k),
\end{equation}
where $G_{k,(i)}$ is the running sum of gradients $\partial_{(i)}f_{\gamma_l}(x_l)$ for $l=0,\dots,k-1$. The tunable hyper-parameters are the learning rate $\eta$ and the initial accumulator value $G_0$. In this paper we consider only the learning rate hyper-parameter as this is equivalent to setting the initial accumulator to a common constant across all dimensions. 


\textbf{Adam.} The Adam method has similar ideas to momentum. It keeps the exponential moving averages
\begin{align}
m_{(i),k+1} &= \beta_1 m_{k,(i)} + (1-\beta_1) \partial_{(i)}f_{\gamma_k}(x_k),\nonumber\\
v_{(i),k+1} &= \beta_2 v_{k,(i)} + (1-\beta_2) [\partial_{(i)}f_{\gamma_k}(x_k)]^2. 
\end{align}
Next, set,
\begin{align}
\hat{m}_{k,(i)} &= \frac{m_{k,(i)}}{1-\beta_1^k},\nonumber\\
\hat{v}_{k,(i)} &= \frac{v_{k,(i)}}{1-\beta_2^k}.
\end{align}
Finally, the Adam update is
\begin{equation}
x_{(i),k+1} = x_{k,(i)} - \frac{\eta}{\sqrt{\hat{v}_{k,(i)}}} \hat{m}_{k,(i)}. 
\end{equation}
The hyper-parameters are the learning rate $\eta$ and the EMA decay parameters $\beta_1,\beta_2$. 

Note that for both methods above, one can also introduce a regularization term $\epsilon$ to the denominator to prevent numerical instabilities. 

\subsection{Implementation of cSGD}
\label{app:sec:csgd}
Recall from Sec. 4.1 that the optimal control solution for learning rate control of the quadratic objective $f(x)=\frac{1}{2}a(x-b)^2$ is given by
\begin{align}
u^*_t &= \begin{cases} 
1 \qquad &a\leq0,\\ 
\min(1,\frac{2 m_t}{\eta\Sigma})\qquad &a>0. 
\end{cases}
\label{app:eq:sme_lr_exact_policy}
\end{align}
The idea is to perform a local quadratic approximation
\begin{equation}
f(x) \approx \frac{1}{2} \sum_{i=1}^{d} a_{(i)}(x_{(i)}-b_{(i)})^2.
\end{equation}
This is equivalent to a local linear approximation of the gradient, i.e. for $i=1,2,\dots,d$
\begin{equation}
\partial_{(i)} f(x) \approx a_{(i)} ( x_{(i)} - b_{(i)} ).
\end{equation}
This effectively decouples the control problems of $d$ identical one-dimensional control problems, so that we may apply~\eqref{app:eq:sme_lr_exact_policy} element-wise. 
We note that this approximation is only assumed to hold locally and the parameters must be updated. There are many ways to do this. Our approach uses linear regression on-the-fly via exponential moving averages (EMA). For each trainable dimension $i$, we maintain the following exponential averages
\begin{align}
\overline{g}_{k+1,(i)} &= \beta_{k,(i)} \overline{g}_{k,(i)} + (1-\beta_{k,(i)}) f'_{\gamma_{k}}(x_{k,(i)}),\nonumber\\
\overline{g^2}_{k+1,(i)} &= \beta_{k,(i)} \overline{g^2}_{k,(i)} + (1-\beta_{k,(i)}) f'_{\gamma_{k}}(x_{k,(i)})^2,\nonumber\\
\overline{x}_{k+1,(i)} &= \beta_{k,(i)} \overline{x}_{k,(i)} + (1-\beta_{k,(i)}) x_{k,(i)},\nonumber\\
\overline{x^2}_{k+1,(i)} &= \beta_{k,(i)} \overline{x^2}_{k,(i)} + (1-\beta_{k,(i)}) x_{k,(i)}^2,\nonumber\\
\overline{gx}_{k+1,(i)} &= \beta_{k,(i)} \overline{gx}_{k,(i)} + (1-\beta_{k,(i)}) x_{k,(i)} f'_{\gamma_{k}}(x_{k,(i)}).
\label{app:eq:lr_running_avg_1}
\end{align}
The decay parameter $\beta_{k,(i)}$ controls the effective averaging window size. In practice, we should adjust $\beta_{k,(i)}$ so that it is small when variations are large, and vice versa. This ensures that our local approximations adapts to the changing landscapes. Since local variations is related to the gradient, we use the following heuristic
\begin{equation}
\beta_{k+1,(i)} = \beta_\text{min} + (\beta_\text{max}-\beta_\text{min}) \frac{\overline{g^2}_{k,(i)} - \overline{g}_{k,(i)}^2}{\overline{g^2}_{k,(i)}}.
\label{app:eq:lr_running_avg_2}
\end{equation}
which is similar to the one employed in~\cite{schaul2013no} for maintaining EMAs. The additional clipping to the range $[\beta_\text{min},\beta_\text{max}]$ is to make sure that there are enough samples to calculate meaningful regressions, and at the same time prevent too large decay values where the contribution of new samples vanish. In the applications presented in this paper, we usually set $\beta_\text{min}=0.9$ and $\beta_\text{max}=0.999$, but results are generally insensitive to these values. 

With the EMAs~\eqref{app:eq:lr_running_avg_1}, we compute $a_{k,(i)}$ by the ordinary-least-squares formula and $\Sigma_{k,(i)}$ as the variance of the gradients:
\begin{align}
a_{k,(i)} &= \frac{\overline{gx}_{k,(i)}-\overline{g}_{k,(i)}\overline{x}_{k,(i)}}{\overline{x^2}_{k,(i)}-\overline{x}_{k,(i)}^2}, \nonumber\\
b_{k,(i)} &= \overline{x}_{k,(i)} - \frac{\overline{g}_{k,(i)}}{a_{k,(i)}}, \nonumber\\
\Sigma_{k,(i)} &= {\overline{g^2}_{k,(i)}-\overline{g}_{k,(i)}^2},
\label{app:eq:lr_running_avg_3}
\end{align}
This allows us to estimate the policy~\eqref{app:eq:sme_lr_exact_policy} as
\begin{equation}
u^*_{k,(i)} = \begin{cases} 
1 \qquad &a_{k,(i)}\leq0,\\ 
\min(1,\frac{a_{k,(i)}(\overline{x}_{k,(i)}-b_{k,(i)})^2}{\eta \Sigma_{k,(i)}})\qquad &a_{k,(i)}>0. 
\end{cases}
\label{app:eq:sme_lr_estimated_policy}
\end{equation}
for $i=1,2,\dots,d$. Since our averages are from exponentially averaged sources, we should also update our learning rate policy in the same way:
\begin{equation}
u_{k+1,(i)} = \beta_{k,(i)} u_{k,(i)} + (1-\beta_{k,(i)}) u^*_{k,(i)}
\label{app:eq:sme_lr_update}
\end{equation}
The algorithm is summarized in Alg.~\ref{app:alg:csgd}
\begin{algorithm}[bt!]
	\caption{controlled SGD (cSGD)}
	\label{app:alg:csgd}
	\begin{algorithmic}
		\STATE {\bfseries Hyper-parameters:} $\eta$, $u_0$
		\STATE Initialize $x_0$; $\beta_{0,(i)}=0.9$ $\forall i$
		\FOR{$k=0$ {\bfseries to} $(\#\text{iterations}-1)$}
		\STATE Compute sample gradient $\nabla f_{\gamma_k}(x_k)$
		\FOR{$i=1$ {\bfseries to} $d$}
		\STATE Update EMA using~\eqref{app:eq:lr_running_avg_1} 
		\STATE Compute $a_{k,(i)}$, $b_{k,(i)}$, $\Sigma_{k,(i)}$ using~\eqref{app:eq:lr_running_avg_3}
		\STATE Compute $u^*_{k,(i)}$ using~\eqref{app:eq:sme_lr_estimated_policy}
		\STATE $\beta_{k+1,(i)} = {(\overline{g^2}_{k,(i)} - \overline{g}_{k,(i)}^2)}/{\overline{g^2}_{k,(i)}}$ and clip
		\STATE $u_{k+1,(i)} = \beta_{k,(i)} u_{k,(i)} + (1-\beta_{k,(i)}) u^*_{k,(i)}$
		\STATE $x_{k+1,(i)} = x_{k,(i)} - \eta u_{k,(i)} \nabla f_{\gamma_k}(x_k)_{(i)}$
		\ENDFOR
		\ENDFOR
	\end{algorithmic}
\end{algorithm} 

\subsection{Implementation of cMSGD}
\label{app:sec:cmsgd}
We wish to apply the momentum parameter control 
\begin{equation}
\mu^*_t = \begin{cases} 
1 \qquad &a\leq0,\\ 
\min(\mu_{\text{opt}},\max(0,1-\frac{\eta\Sigma}{4m_t}))\qquad &a>0,
\end{cases}
\label{app:eq:sme_mom_exact_policy}
\end{equation}
where $\mu_\text{opt} = \max\{0,1-2\sqrt{a\eta}\}$. We proceed in the same way as in Sec.~\ref{app:sec:csgd} by keeping the relevant EMA averages and performing linear regression on the fly. The only difference is the application of the momentum parameter adjustment, which is 
\begin{equation}
\mu^*_{k,(i)} = \begin{cases} 
1 \qquad &a_{k,(i)}\leq0,\\ 
\min[\max(0,1-2\sqrt{a_{k,(i)}\eta}),\\ \max(0,1-\frac{\eta\Sigma_{k,(i)}}{2 a_{k,(i)} (x_{k,(i)}-b_{k,(i)})^2}) ]\qquad &a_{k,(i)}>0,
\end{cases}
\label{app:eq:sme_mom_estimated_policy}
\end{equation}
The algorithm is summarized in Alg.~\ref{app:alg:cmsgd}. 

\begin{algorithm}[bt!]
	\caption{controlled momentum SGD (cMSGD)}
	\label{app:alg:cmsgd}
	\begin{algorithmic}
		\STATE {\bfseries Hyper-parameters:} $\eta$, $\mu_0$
		\STATE Initialize $x_0$, $v_0$; $\beta_{0,(i)}=0.9$ $\forall i$
		\FOR{$k=0$ {\bfseries to} $(\#\text{iterations}-1)$}
		\STATE Compute sample gradient $\nabla f_{\gamma_k}(x_k)$
		\FOR{$i=1$ {\bfseries to} $d$}
		\STATE Update EMA using~\eqref{app:eq:lr_running_avg_1} 
		\STATE Compute $a_{k,(i)}$, $b_{k,(i)}$, $\Sigma_{k,(i)}$ using~\eqref{app:eq:lr_running_avg_3}
		\STATE Compute $\mu^*_{k,(i)}$ using~\eqref{app:eq:sme_mom_estimated_policy}
		\STATE $\beta_{k+1,(i)} = {(\overline{g^2}_{k,(i)} - \overline{g}_{k,(i)}^2)}/{\overline{g^2}_{k,(i)}}$ and clip
		\STATE $\mu_{k+1,(i)} = \beta_{k,(i)} \mu_{k,(i)} + (1-\beta_{k,(i)}) \mu^*_{k,(i)}$
		\STATE $v_{k+1,(i)} = \mu_{k,(i)} v_{k,(i)} - \eta  \nabla f_{\gamma_k}(x_k)_{(i)}$
		\STATE $x_{k+1,(i)} = x_{k,(i)} + v_{k+1,(i)}$
		\ENDFOR
		\ENDFOR
	\end{algorithmic}
\end{algorithm}

\subsection{Training accuracy for C1}

For completeness we also provide in Fig.~\ref{app:fig:train_acc} the training accuracies of C1 with various hyper-parameter choices and methods tested in this work. These complements the plots of test accuracies in Fig. 3,5,6 in the main paper. We see that cSGD and cMSGD display the same robustness in terms of test and training accuracies. 
\begin{figure}[t]
	\centering
	\subfloat[C1, Learning rate adjustments (c.f. main paper Fig . 3)]{\includegraphics[width=12cm]{./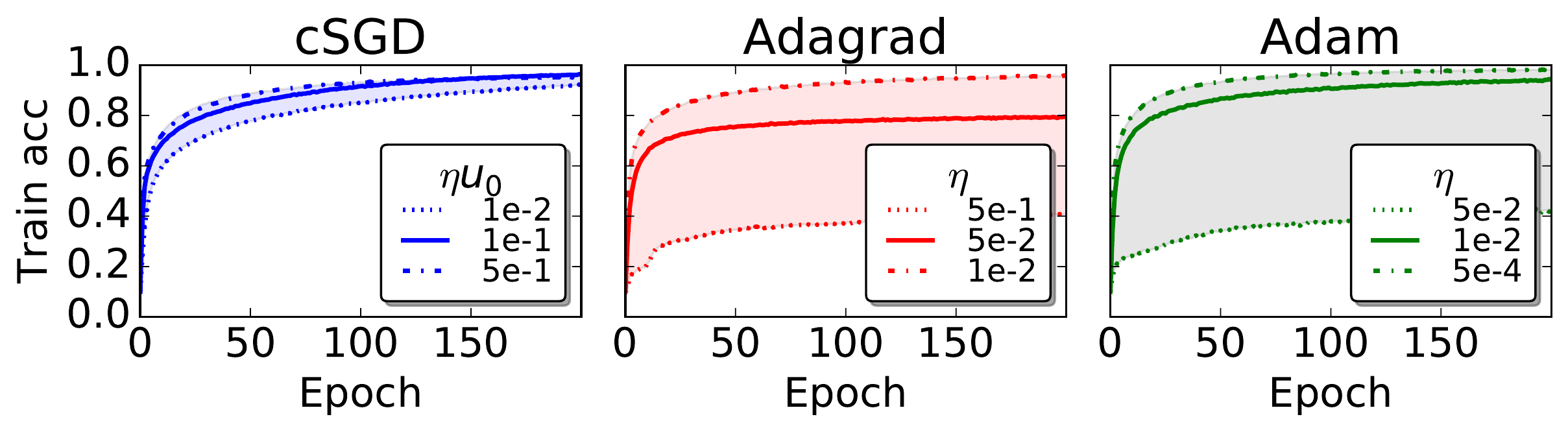}}
	
	\subfloat[C1, Momentum adjustments (c.f. main paper Fig . 5)]{\includegraphics[width=12cm]{./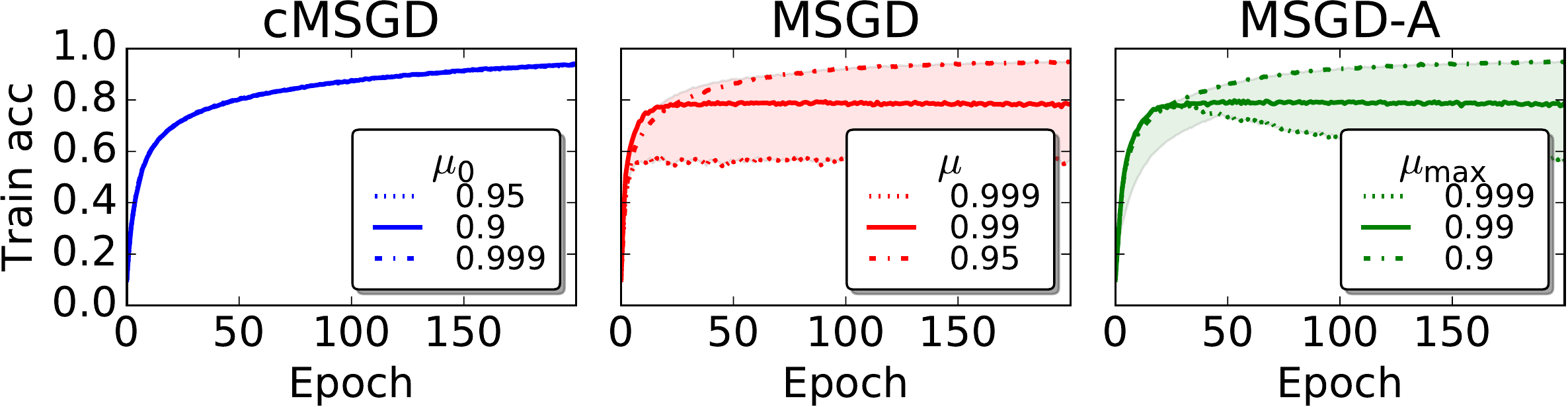}}
	
	\subfloat[C1, Learning rate sensitivity (c.f. main paper Fig . 6)]{\includegraphics[width=12cm]{./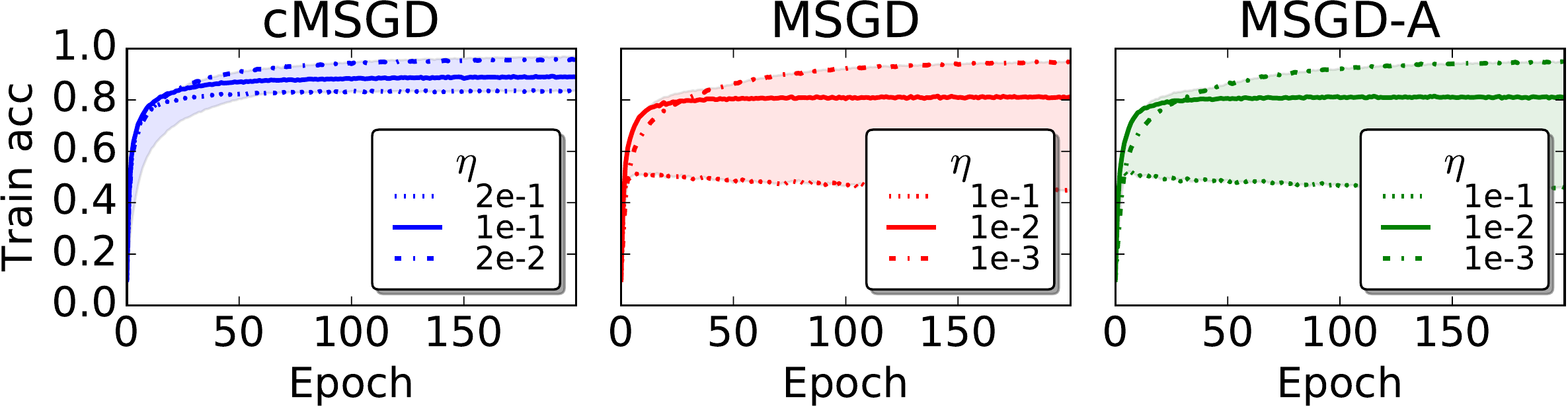}}
	\caption{Training accuracies for various methods and hyper-parameter choices. The set-up is the same as in the main paper, Fig. 3,5,6 except that we plot training accuracy instead of test accuracy. The qualitative observation is the same: cSGD and cMSGD are generally robust to changing parameters and models. }
	\label{app:fig:train_acc}
\end{figure}

\bibliography{ref}
\bibliographystyle{icml2017}

\end{document}